\documentclass[conference]{IEEEtran}
\IEEEoverridecommandlockouts
% The preceding line is only needed to identify funding in the first footnote. If that is unneeded, please comment it out.
\usepackage{cite}
\usepackage{mathtools}
\usepackage{times}
\usepackage[utf8]{inputenc}
\usepackage{caption}
\usepackage{hyperref}
\usepackage{graphicx}
\usepackage{booktabs}
\usepackage{subfigure}
\usepackage{color}
\usepackage{array}
\usepackage{tabularx}
\usepackage{changepage}
\usepackage{amssymb}% http://ctan.org/pkg/amssymb
\usepackage{pifont}% http://ctan.org/pkg/pifont
\usepackage{enumitem}
\usepackage{gensymb}
\usepackage{makecell}
\usepackage{textcomp}
\usepackage{float}
\usepackage{balance}
\usepackage{amsmath}
\usepackage[]{algorithm2e}
\usepackage{ctable} % for \specialrule command
\usepackage{amsthm}
\usepackage{hyperref}
\usepackage{amsmath,amssymb,amsfonts}
\newtheorem{theorem}{Theorem}
\usepackage{float}

\newcolumntype{L}[1]{>{\raggedright\let\newline\\\arraybackslash\hspace{0pt}}m{#1}}
\newcolumntype{C}[1]{>{\centering\let\newline\\\arraybackslash\hspace{0pt}}m{#1}}
\newcolumntype{R}[1]{>{\raggedleft\let\newline\\\arraybackslash\hspace{0pt}}m{#1}}

\newcommand{\RNum}[1]{\uppercase\expandafter{\romannumeral #1\relax}}
%%
%% \BibTeX command to typeset BibTeX logo in the docs
\AtBeginDocument{%
	\providecommand\BibTeX{{%
			\normalfont B\kern-0.5em{\scshape i\kern-0.25em b}\kern-0.8em\TeX}}}

\def\BibTeX{{\rm B\kern-.05em{\sc i\kern-.025em b}\kern-.08em
    T\kern-.1667em\lower.7ex\hbox{E}\kern-.125emX}}
\begin{document}

\title{SSDNet: State Space Decomposition Neural Network for Time Series Forecasting}

	\author{\IEEEauthorblockN{Yang Lin}
		\IEEEauthorblockA{\textit{School of Computer Science} \\
			\textit{The University of Sydney}\\
			Sydney, Australia \\
			ylin4015@uni.sydney.edu.au}
		\and
		\IEEEauthorblockN{Irena Koprinska}
		\IEEEauthorblockA{\textit{School of Computer Science} \\
			\textit{The University of Sydney}\\
			Sydney, Australia \\
			irena.koprinska@sydney.edu.au}
		\and
		\IEEEauthorblockN{Mashud Rana}
		\IEEEauthorblockA{\textit{Data61} \\
			\textit{CSIRO}\\
			Sydney, Australia \\
			mdmashud.rana@data61.csiro.au}
	}

\maketitle

\begin{abstract}
In this paper, we present SSDNet, a novel deep learning approach for time series forecasting.
SSDNet combines the Transformer architecture with state space models to provide probabilistic and interpretable forecasts, including trend and seasonality components and previous time steps important for the prediction. 
The Transformer architecture is used to learn the temporal patterns and estimate the parameters of the state space model directly and efficiently, without the need for Kalman filters.
We comprehensively evaluate the performance of SSDNet on five data sets, showing that SSDNet is an effective method in terms of accuracy and speed, outperforming state-of-the-art deep learning and statistical methods, and able to provide meaningful trend and seasonality components.

\end{abstract}

\begin{IEEEkeywords}
time series forecasting, time series decomposition, state space model, deep learning
\end{IEEEkeywords}
\section{Introduction}
\bstctlcite{IEEEexample:BSTcontrol}

Time series forecasting is an important task in many practical applications, e.g. predicting electricity demand, stock prices, immune response and disease progression over time.

Statistical methods such as ARIMA and exponential smoothing are well established for time series forecasting and are a part of the more general framework of State Space Models (SSMs) \cite{Durbin01book}. SSMs are considered interpretable models as components such as trend and seasonality can be extracted and used for decision making and explanation. However, they are not able to infer shared patterns from a set of related time series as each time series is fitted independently 
\cite{N-BEATS,Logsparse19NIPS}.

Deep learning methods have been recently investigated as a promising alternative, due to their ability to learn from raw data with minimum domain knowledge and to extract complex patterns, including shared patterns across related time series.  
Prominent examples include DeepAR \cite{DeepAR20}, a probabilistic forecasting model based on Long Short Term Memory (LSTM) neural networks, DeepSSM \cite{DeepSSM18NIPS} which combines SSM with LSTM,
Deep Factor models \cite{DeepFactor19} which combine LSTM with a local probabilistic model, and LogSparse Transformer \cite{Logsparse19NIPS} and Informer \cite{Informer20}, which are modifications of the Transformer architecture \cite{Transformer17NIPS} for time series forecasting. However, deep learning models are difficult to interpret; generating interpretable forecasts is crucial for adopting these systems in practical applications.

Approaches for producing interpretable forecasts using deep learning models have been recently proposed. For example, Guo et al. \cite{Tian19ICML} and Li et al. \cite{Li19IJCAI} explored the structure of LSTM and DeepSSM attempting to learn the variable importance for prediction. 
Oreshkin et al. \cite{N-BEATS} developed the deep learning architecture N-BEATS.
They showed that its interpretable configuration (N-BEATS-I) is able to successfully learn and output the trend and seasonality components of the forecast, which is useful for forecasting practitioners. 
However, N-BEATS is designed for univariate time series, it provides point forecasts and has limited ability to model trends with varying slope because it uses a deterministic trend model. 
In this paper, we present a new forecasting approach to address these issues.

The main contributions of this work are as follows:

\begin{enumerate}
	\item We present a new forecasting approach, called State Space Decomposition Neural Network (SSDNet), which combines the Transformer deep learning architecture with a SSM.
	SSDNet combines the advantages of deep learning (learning from raw data without intensive feature engineering and ability to infer shared patterns from related time series) with the interpretability of SSM models. It employs the Transformer architecture to learn the temporal pattern and directly estimate the parameters of SSM. To facilitate interpretability, we used a fixed form SSM to provide trend and seasonality components and the attention mechanism of the Transformer to identify which parts of the past history are most important for the prediction. 
	\item We evaluate the performance of SSDNet on five time series forecasting tasks. The results show that SSDNet achieved higher accuracy than the state-of-the-art deep learning models DeepAR, DeepSSM, LogSparse Transformer, Informer and N-BEATS and the statistical models SARIMAX and Prophet. SSDNet was also able to provide interpretable results by showing nonlinear trend and seasonality components. To evaluate the effectiveness of the fixed form SSM part, we conduct an ablation study by replacing the Transformer with LSTM, and this architecture also showed competitive results.
\end{enumerate}

\section{Problem Formulation}

We consider three tasks: 1) solar power forecasting, 2) electricity demand forecasting and 3) exchange rate forecasting. 
Accurate forecasting of the generated solar power and the electricity demand is needed for optimal scheduling of generators and large-scale integration of solar into the electricity grid. Over-prediction may lead to wasting energy, under-prediction may result in blackouts. Exchange rate forecasting is used to derive future monetary value, earn profits and avoid risks in international business environments.

\subsection{Data Sets}
\begin{figure}[!t]
\centering	  
\subfigure[]{\includegraphics[width=.48\columnwidth]{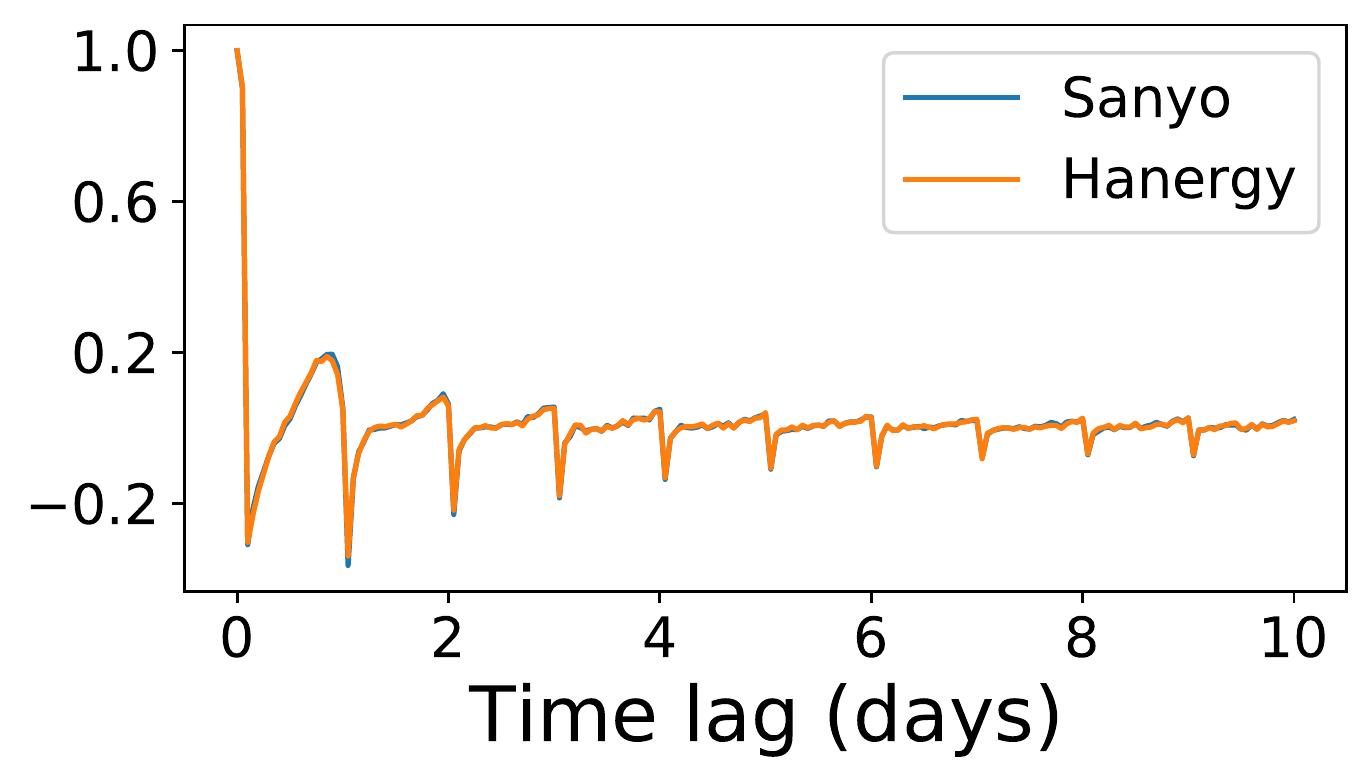}}
\subfigure[]{\includegraphics[width=.48\columnwidth]{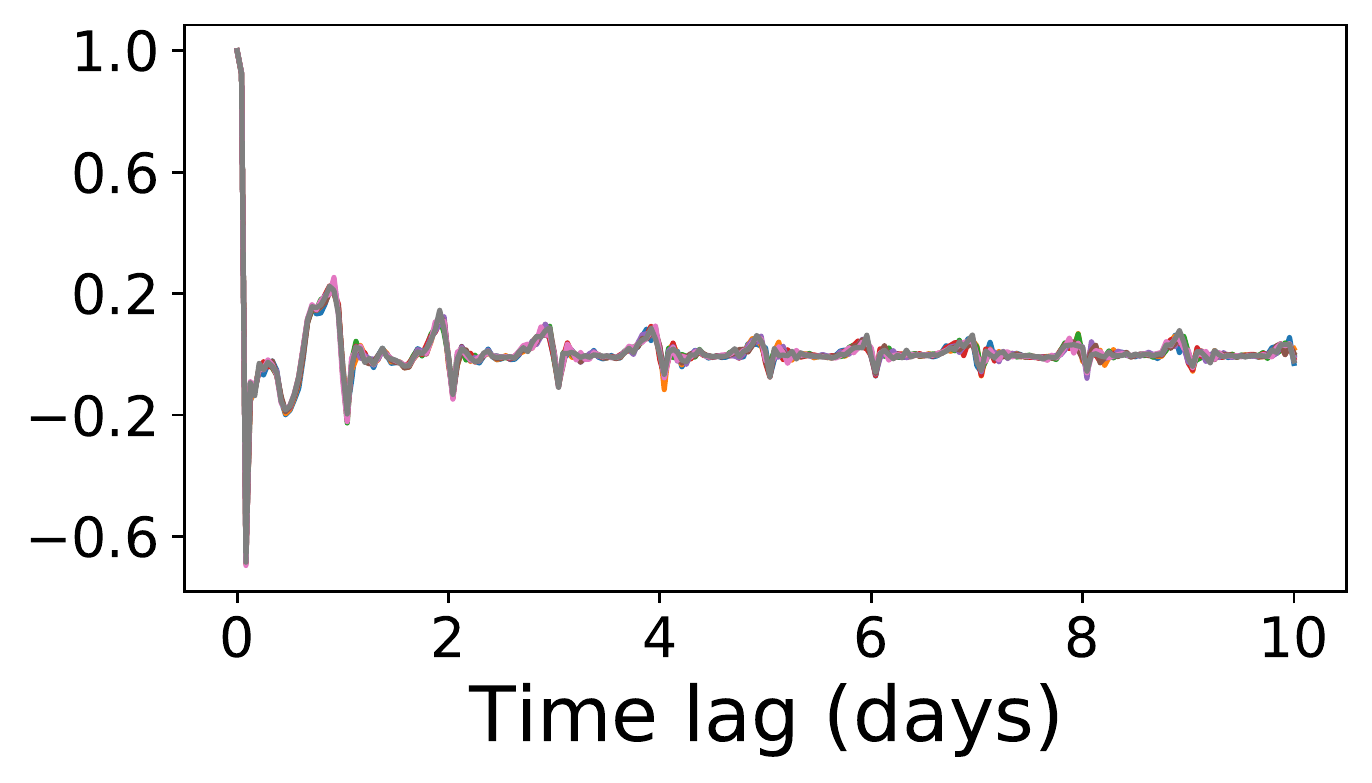}}
\subfigure[]{\includegraphics[width=.48\columnwidth]{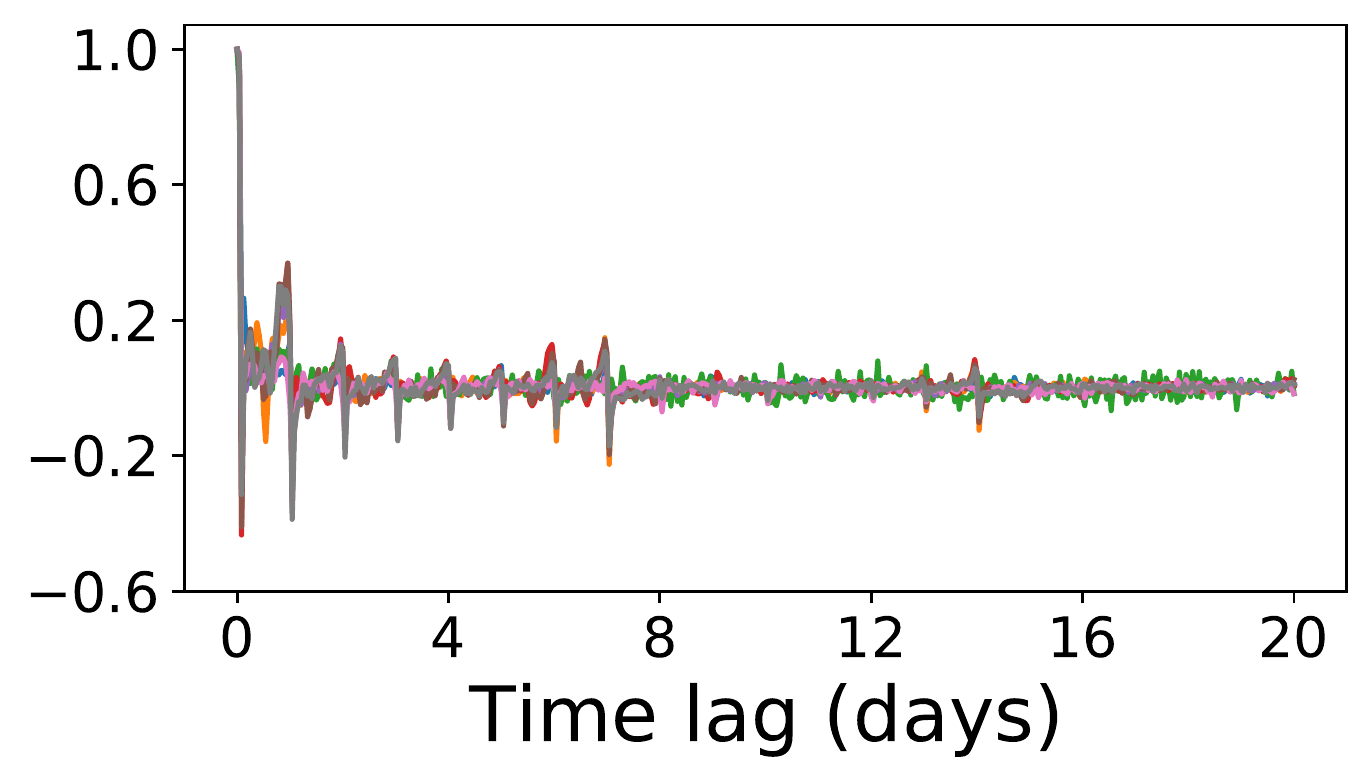}}		\subfigure[]{\includegraphics[width=.48\columnwidth]{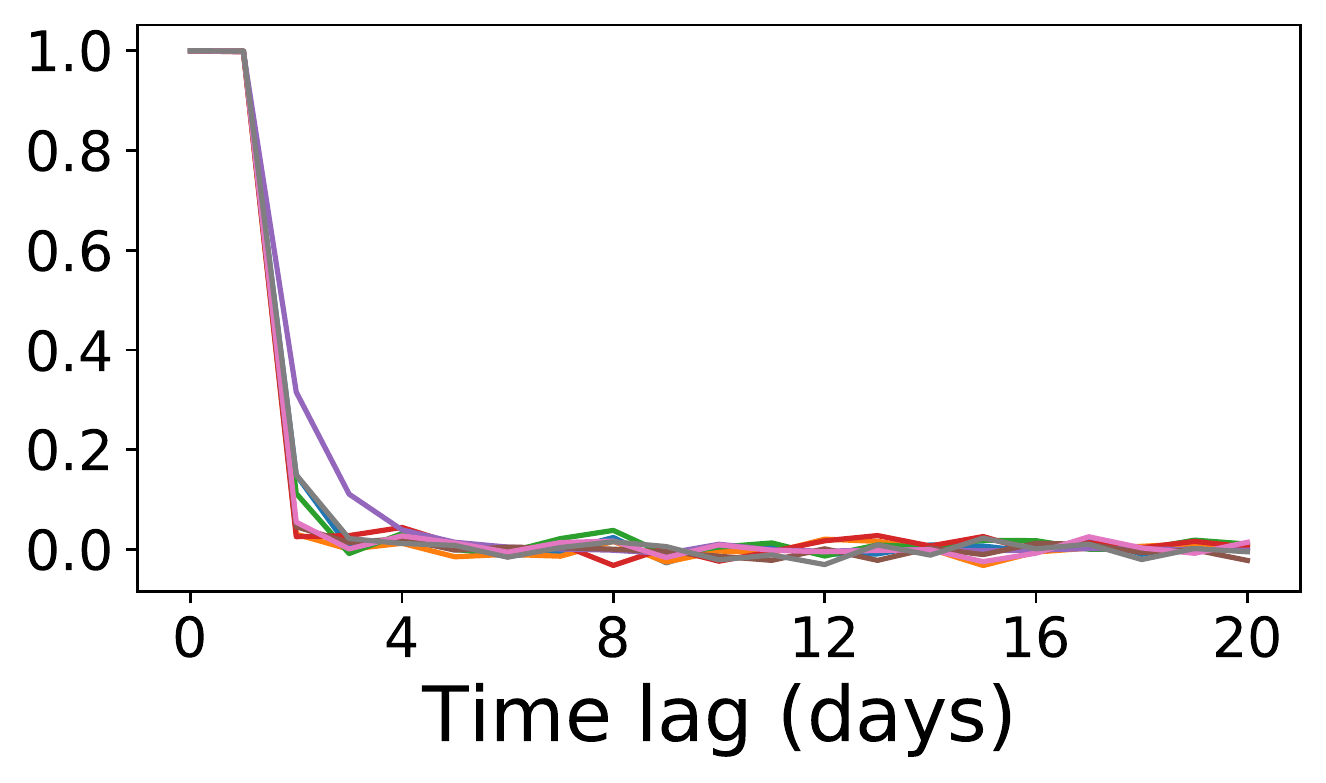}}	
\caption{Partial autocorrelation graphs for (a) Sanyo and Hanergy, (b) Solar, (c) Electricity and (d) Exchange data sets}
\label{pacf}
\end{figure}

\begin{figure}[!t]
\centering	  
\subfigure[]{\includegraphics[width=.48\columnwidth]{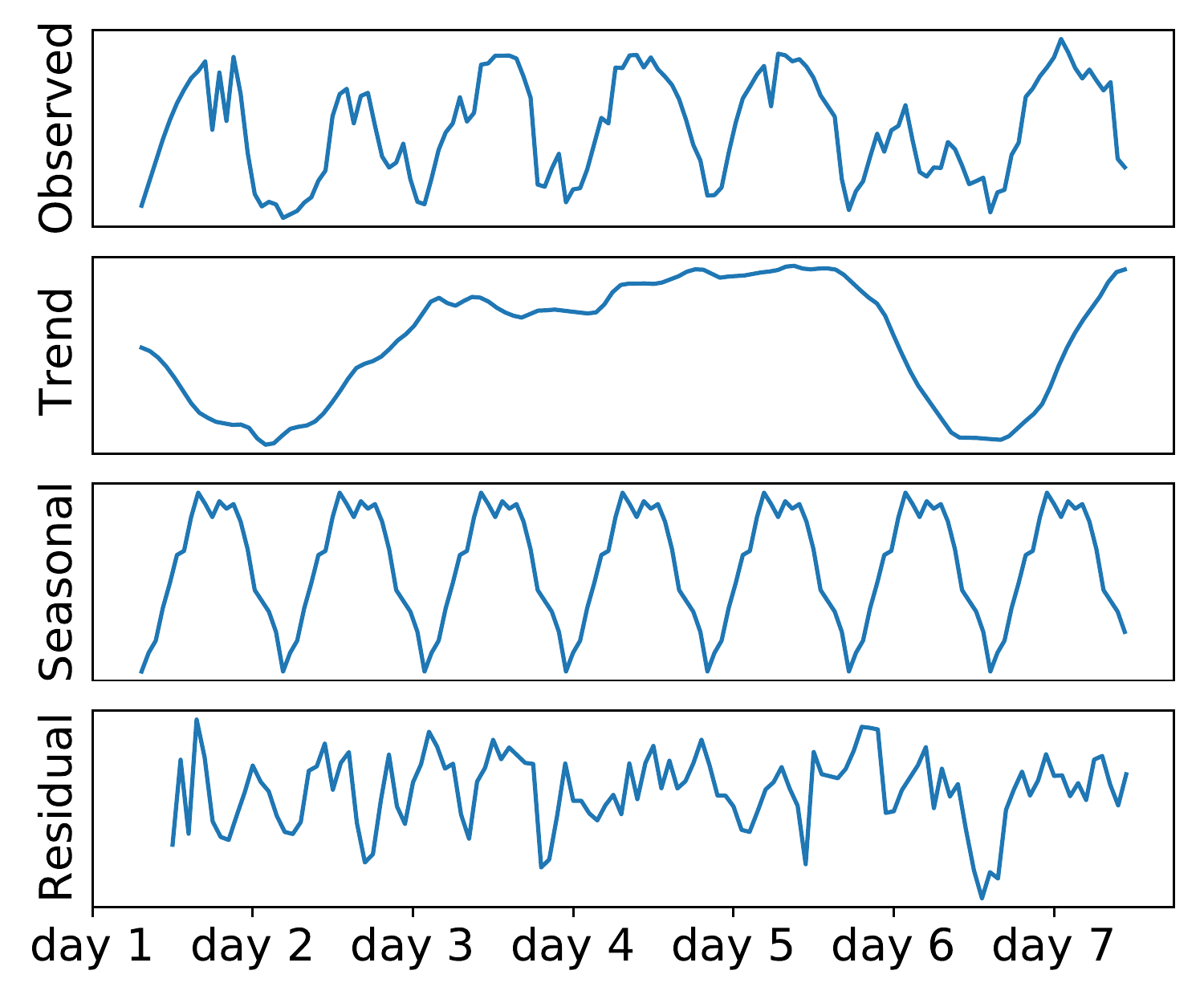}}
\subfigure[]{\includegraphics[width=.48\columnwidth]{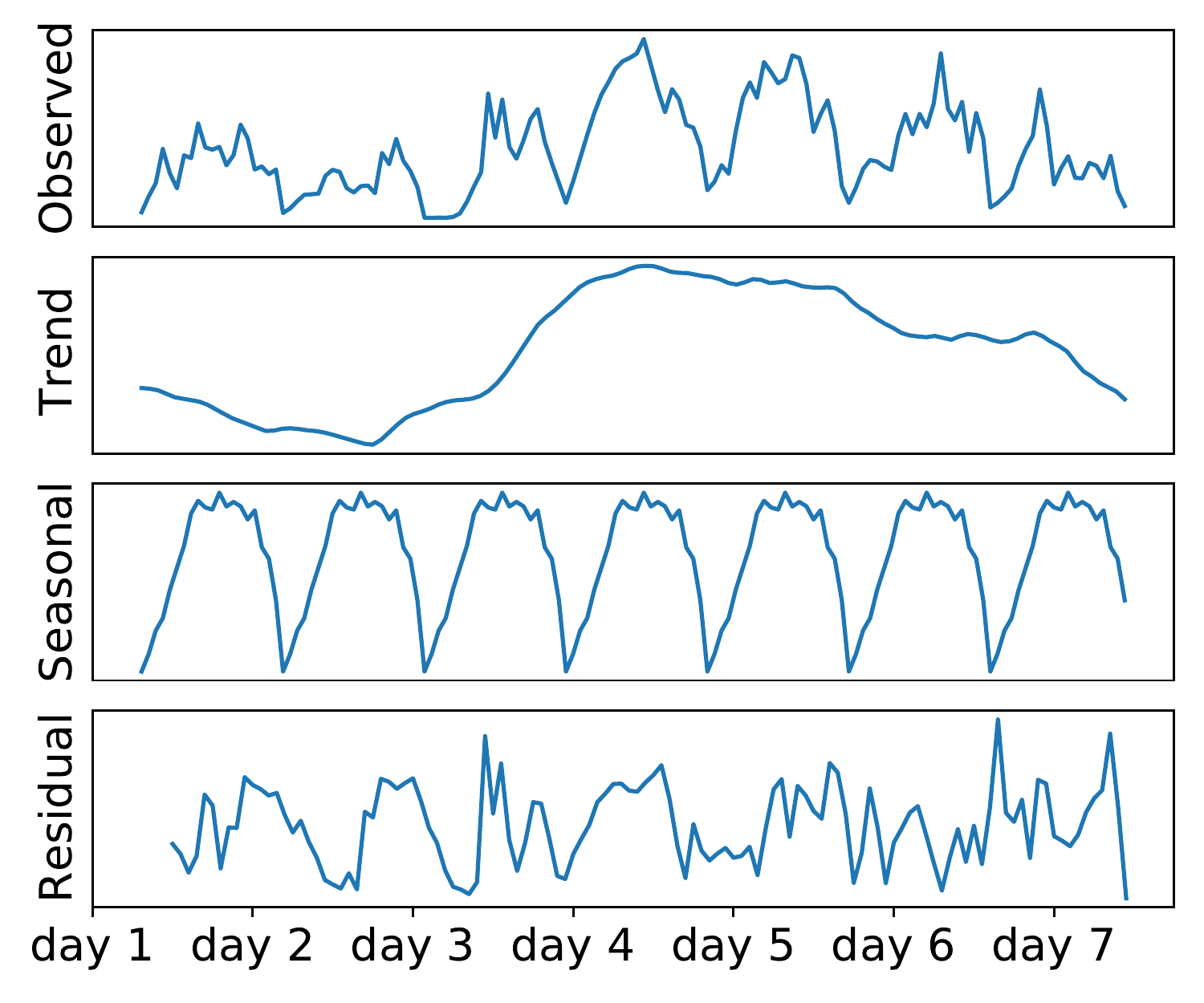}}		\subfigure[]{\includegraphics[width=.48\columnwidth]{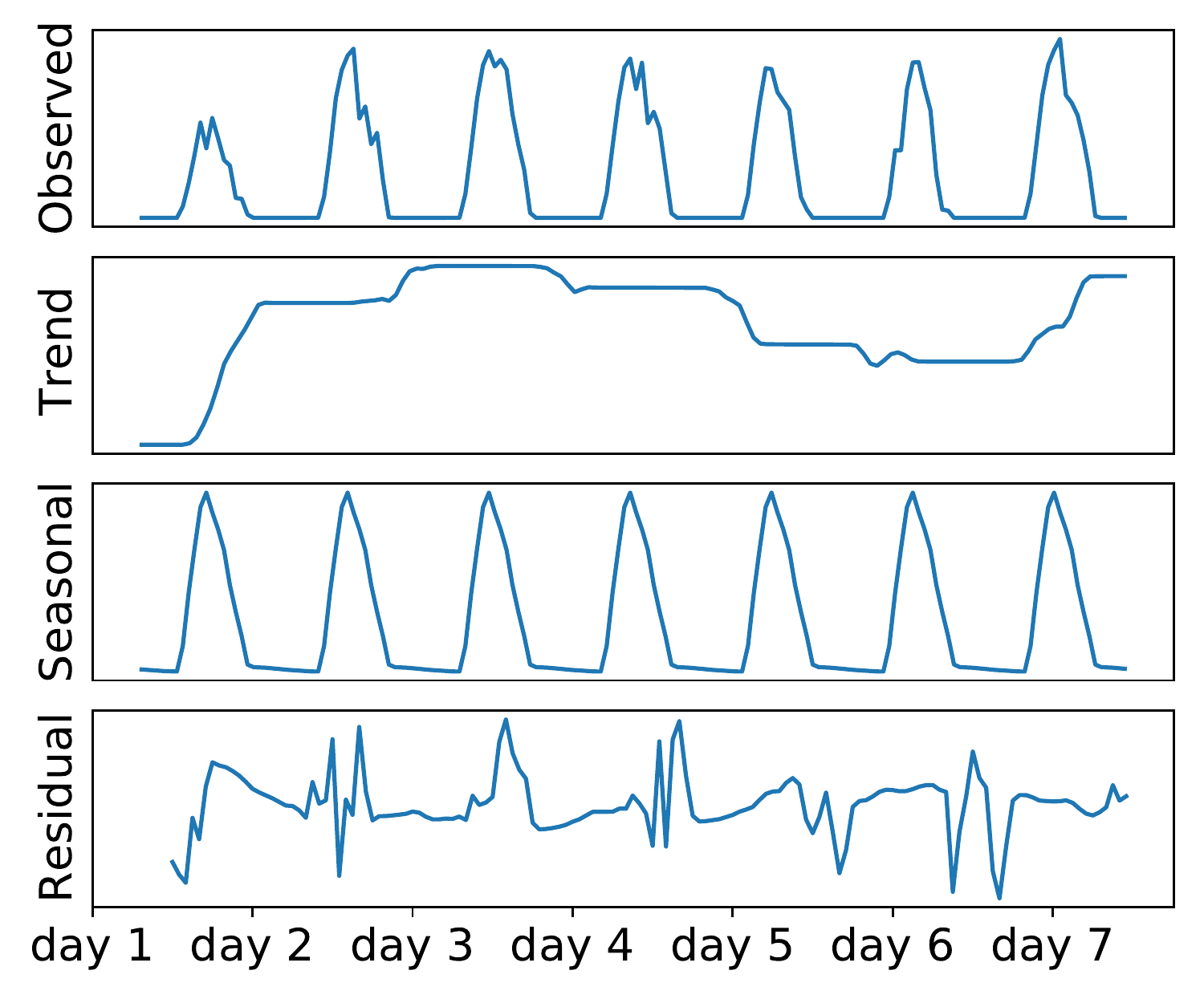}}
\subfigure[]{\includegraphics[width=.48\columnwidth]{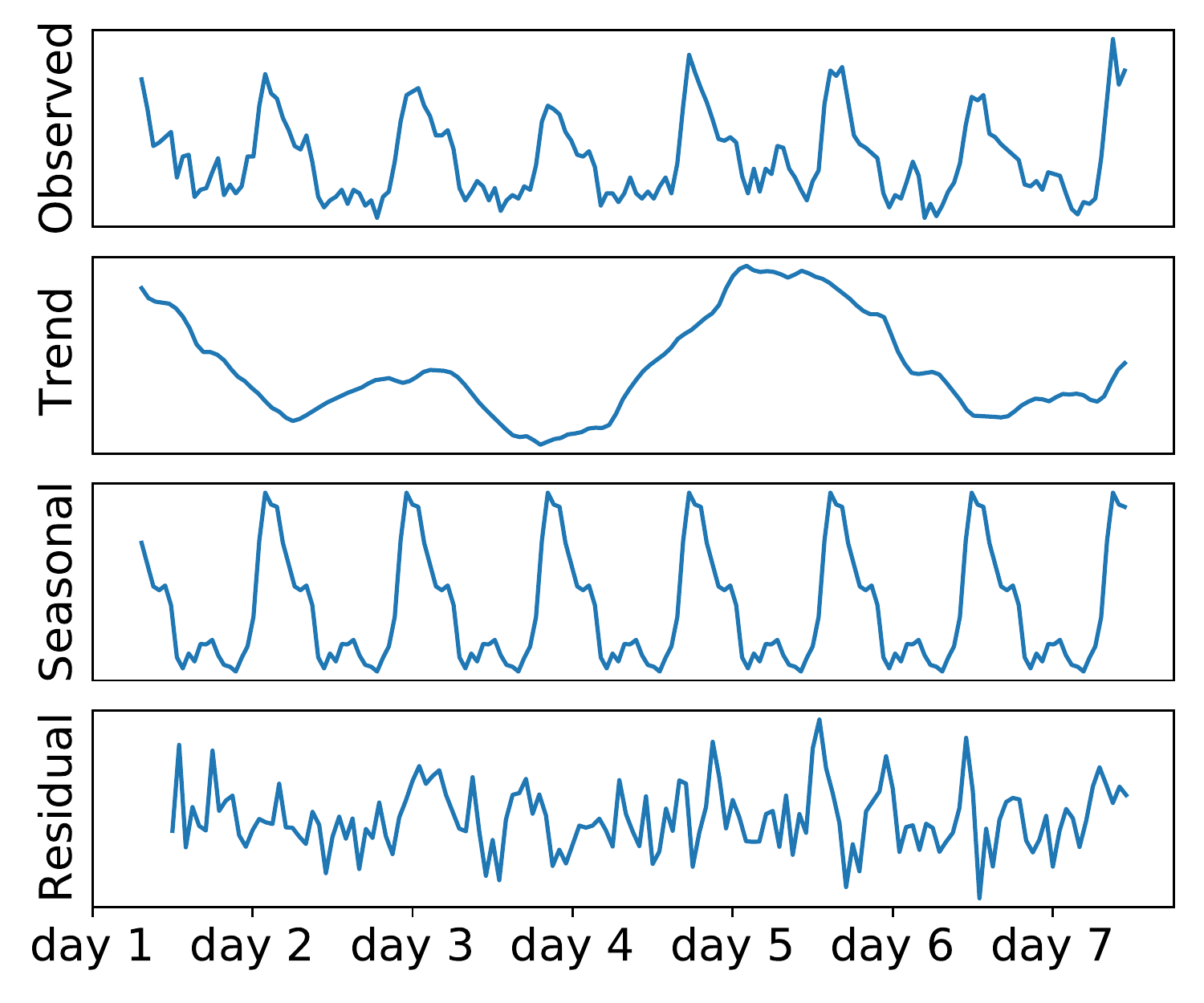}}		\subfigure[]{\includegraphics[width=.98\columnwidth]{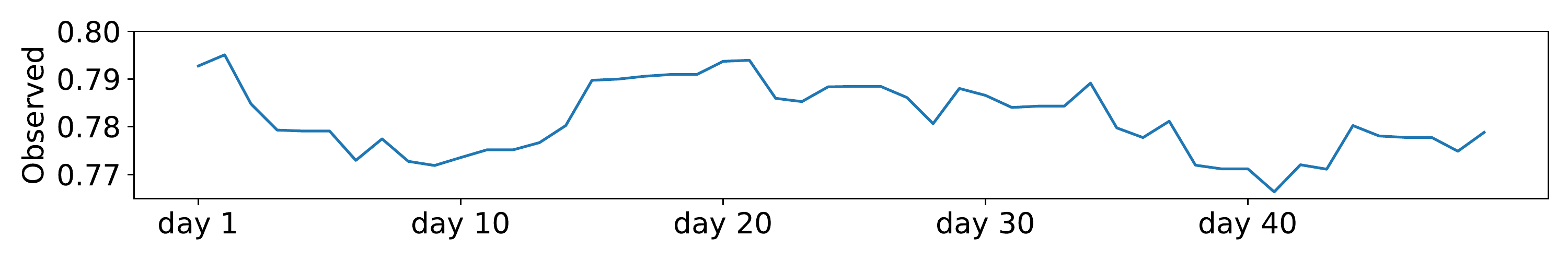}}	
\caption{Time series additive decomposition for (a) Sanyo, (b) Hanergy, (c) Solar, (d) Electricity data sets and (e) one series from Exchange data set}
\label{decomposition_results}
\end{figure}

\begin{table*}[!h]
	\renewcommand{\arraystretch}{1}	
	\centering
	\begin{tabular}{C{1.5cm} C{1.5cm} C{1.5cm} C{1.5cm} C{.8cm} C{.8cm} C{.8cm} C{.8cm} C{.8cm} C{.8cm} } 
		\specialrule{.1em}{.05em}{.05em} 
		&start date&end date &granularity &$L_d$&$N$& $n_{T}$  & $n_{C}$& $T_{l}$  & $T_{h}$ \\	
		\hline
		Sanyo&01/01/2011& 31/12/2016 & 30 minutes&20 & 1 &4 &3&20 &20\\				
		Hanergy&01/01/2011& 31/12/2017 & 30 minutes&20 & 1 &4 &3&20 &20\\			
		Solar&01/01/2006& 31/08/2006 & 1 hour&24 & 137 &0 &3&24 &24\\			
		Electricity&01/01/2011& 07/09/2014 & 1 hour&24 & 370 &0 &4&168 &24\\		
		Exchange&01/01/1990&31/12/2016&1 day&1&8&0&3&30 &20\\
		\specialrule{.1em}{.05em}{.05em} 
	\end{tabular}
	\caption{Dataset statistics. $L_d$ - number of steps per day, $N$ - number of series, $n_{T}$ - number of time-based features, $n_{C}$ - number of calendar features, $T_{l}$ - length of input series, $T_{h}$ - length of forecasting horizon. }
	\label{datasets} 
\end{table*}
\textbf{}

We use five data sets: Sanyo \cite{San}, Hanergy \cite{Han}, Solar \cite{Sol}, Electricity \cite{Ele} and Exchange \cite{Exc}; their statistics are summarised in Table \ref{datasets}.

\textbf{Sanyo} and \textbf{Hanergy} contain solar power generation data from two PV plants in Australia -  from 01/01/2011 to 31/12/2016 (6 years) for Hanergy and 01/01/2011 to 31/12/2017 (7 years) for Sanyo. Only the data between 7am and 5pm is considered and was aggregated at half-hourly intervals. For both datasets, weather and weather forecast data was also collected (see \cite{Yang20ICONIP} for more details) and used as covariates.
\textbf{Solar} contains solar power data from 137 PV plants in Alabama, USA, from 01/01/2006 to 31/08/2006.
\textbf{Electricity} contains electricity consumption data for 370 households from 01/01/2011 to 07/09/2014. The Solar and Electricity data is aggregated into 1-hour intervals. 
\textbf{Exchange} contains daily exchange rate data for 8 countries from 01/01/1990 to 31/12/2016. 
Following \cite{Logsparse19NIPS,Yang20ICONIP,TCAN_Yang}, calendar features are added according to the granularity of the datasets: Sanyo and Hanergy use \textit{month, hour-of-the-day and minute-of-the-hour}, Solar uses \textit{month, hour-of-the-day and age}, Electricity uses \textit{month, day-of-the-week, hour-of-the-day and age} and Exchange uses \textit{month, day-of-the-week and age}.

Fig. \ref{pacf} shows the partial autocorrelation plots for each data set. Fig. \ref{pacf} (a) shows the results for both  Sanyo and Hanergy since they have a similar partial autocorrelation pattern; both datasets include only 1 time series. 
Fig. \ref{pacf} (b), (c) and (d) show the results for Solar, Electricity and Exchange datasets respectively. All plots are for 8 time series for consistency since 8 is the common number of time series in the three datasets, see Table \ref{datasets}.   

Fig. \ref{decomposition_results} shows the additive decomposition results for a period of 7 days for Sanyo, Hanergy, Solar and Electricity. For Exchange we just show one time series as this dataset doesn't have a clear seasonality and additive decomposition methods cannot be applied.

We can see that all time series except Exchange exhibit seasonality and repetitive patterns with high partial autocorrelation, and sometimes there are also significant short-term trend variations. 
For the Exchange set, there is no clear repetitive pattern and its partial autocorrelation at lag 1 is the only significant one.

All data was normalized to have zero mean and unit variance.

\subsection{Problem Statement}

Given is: 1) a set of $N$ univariate time series (solar, electricity or exchange series) $\{\mathbf{Y}_{i,1:T_l}\}^N_{i=1}$, where $\mathbf{Y}_{i,1:T_l}\coloneqq[{y}_{i,1}, {y}_{i,2},...,{y}_{i,T_l}]$, $T_l$ is the input sequence length, and ${y}_{i,t}\in\Re$ is the value of the $i$th time series (generated PV solar power or consumed electricity) at time $t$; 2) a set of associated time-based multi-dimensional covariate vectors $\{\mathbf{X}_{i, 1: T_l+T_h}\}_{i=1}^{N}$, where $T_h$ denotes the length of the forecasting horizon. The covariates for the Sanyo and Hanergy datasets include: weather data $\{\mathbf{W1}_{i, 1: T_l}\}_{i=1}^{N}$, weather forecasts $\{\mathbf{WF}_{i, T_l+1: T_l+T_h}\}_{i=1}^{N}$ and calendar features $\{\mathbf{Z}_{i, 1: T_l+T_h}\}_{i=1}^{N}$, while the covariates for the Solar, Electricity and Exchange datasets include only calendar features.
Our goal is to predict the future values of the time series $\{\mathbf{Y}_{i,T_l+1:T_l+T_h}\}^N_{i=1}$, i.e. the PV power or electricity usage for the next $T_h$ time steps after $T_l$.

Specifically, SSDNet produces the probability distribution of the future values, given the past history:
\begin{equation}
\begin{split}
p\left(\mathbf{Y}_{i,T_l+1:T_l+T_h} \mid \mathbf{Y}_{i,1:T_l}, \mathbf{X}_{i, 1: T_l+T_h} ; \Phi\right)\\ = \prod_{t=T_{l}+1}^{T_{l}+T_{h}} p\left({y}_{i, t} \mid \mathbf{Y}_{i, 1: t-1}, \mathbf{X}_{i, 1: t} ; \Phi\right),
\end{split}
\end{equation}
where $\Phi$ denotes the parameters of SSDNet, and the input of SSDNet at step $t$ is the concatenation of ${y}_{i, t-1}$ and ${x}_{i, t}$. 
The models are applicable to all time series, so the subscript $i$ will be omitted in the rest of the paper for simplicity.

\section{Related Work}
DeepSSM \cite{DeepSSM18NIPS} combines SSMs with Recurrent Neural Networks (RNNs). A RNN is used to generate the parameters of a linear Gaussian SSM for time series forecasting and Kalman filter is used to derive posterior knowledge. 
DeepSSM is considered interpretable because it can recover the SSM parameters and they can be inspected. 
The Deep Factor model DF-LDS \cite{DeepFactor19} is a variation of DeepSSM that can handle non-Gaussian observations and it also uses Kalman filters.  

Although Kalman filters have been successfully used in linear dynamic systems for decades, the need for covariance matrix inversions is computationally expensive and limits their applicability for processing large data sets \cite{DeepKalman16arXiv,RKN19ICML}.  
In contrast, Eleftheriadis et al. \cite{Stefanos17NIPS} proposed Gaussian Process State Space Model (GPSSM) merging Gaussian process with a bi-directional RNN to approximate the posterior of a non-linear system. However, GPSSM has high complexity of inference ($O(T)$ per time step) which is a disadvantage for forecasting multi-horizon large data sets.

N-BEATS \cite{N-BEATS} is a novel deep neural architecture using backward and forward residual links and stacks of fully connected layers for univariate point forecasting.
Each N-BEATS layer contains a number of blocks which produce partial forecasts; these forecasts are aggregated at stack level and then network level in hierarchical fashion. The final forecast is the sum of the partial forecasts of all blocks.
N-BEATS can produce interpretable forecasting results without expert knowledge by employing a fixed form of block function to generate prediction with decomposition components.
The interpretable version (N-BEATS-I) uses the deterministic polynomial trend model
and assumes a stationary trend with constant slope. However, the local trend of time series over a short period (one or a few days) could vary significantly and inconsistently - e.g. the solar power generation or electricity consumption could decrease suddenly due to temporary cloud coverage or irregular activities of users (see Fig. \ref{decomposition_results}). Hence, a stochastic model which assumes a nonstationary trend with varying slope would be more suitable for such cases  \cite{Rob2019book} and this is what we use in SSDNet.

In summary, compared to DeepSSM, the proposed SSDNet: 
i) employs the Transformer architecture to learn temporal patterns, and estimates the parameters of SSM and the probability term directly instead of using Kalman filters; 
ii) uses the SSM with fixed and non-trainable transition matrix at the decoder to generate decomposition results within the forecasting horizon, while DeepSSM uses a more flexible SSM along all time steps; 
iii) combines Mean Absolute Error (MAE) and Negative Log-Likelihood (NLL) together as loss function to achieve accurate point and probabilistic forecasting results simultaneously instead of using NLL only; 
iv) shows better interpretability by using a stochastic decomposition technique in the form of SSM with an innovation term to model nonlinear trend and seasonality components and attention mappings to show the importance of historical time steps for forecasting. 
Compared to N-BEATS, SSDNet can model trend with nonlinear slope, provide probabilistic forecasts and also better utilizes the covariates.

\section{SSDNet}
The aim of SSDNet is to model temporal patterns effectively in order to provide accurate probabilistic and interpretable forecasts.

\subsection{Network Architecture}

Fig. \ref{SSDNet} illustrates the architecture of SSDNet.
SSDNet is based on the encoder-decoder framework and uses two key components: SSM and Transformer. SSM (in its fixed form) provides interpretable forecasting results, and the Transformer learns temporal patterns and estimates the SSM parameters from its decoder.

The feedforward process of SSDNet includes two steps. First, the Transformer of SSDNet processes the historical time steps and generates the latent components to estimate the parameters of SSM and the variance of the forecasted distribution. Second, SSM takes the state vector from the previous time step and uses it to predict the mean of the distribution. 
Unlike classical SSMs which decompose time series and estimate parameters via Kalman filter, we learn the SSM parameters using the Transformer architecture via attention mechanism and then combine the decomposition components (trend, seasonality and probability terms) in a form of SSM to produce the forecasting results. 

\begin{figure}[!t]
	\centering	  
	\includegraphics[width=\columnwidth]{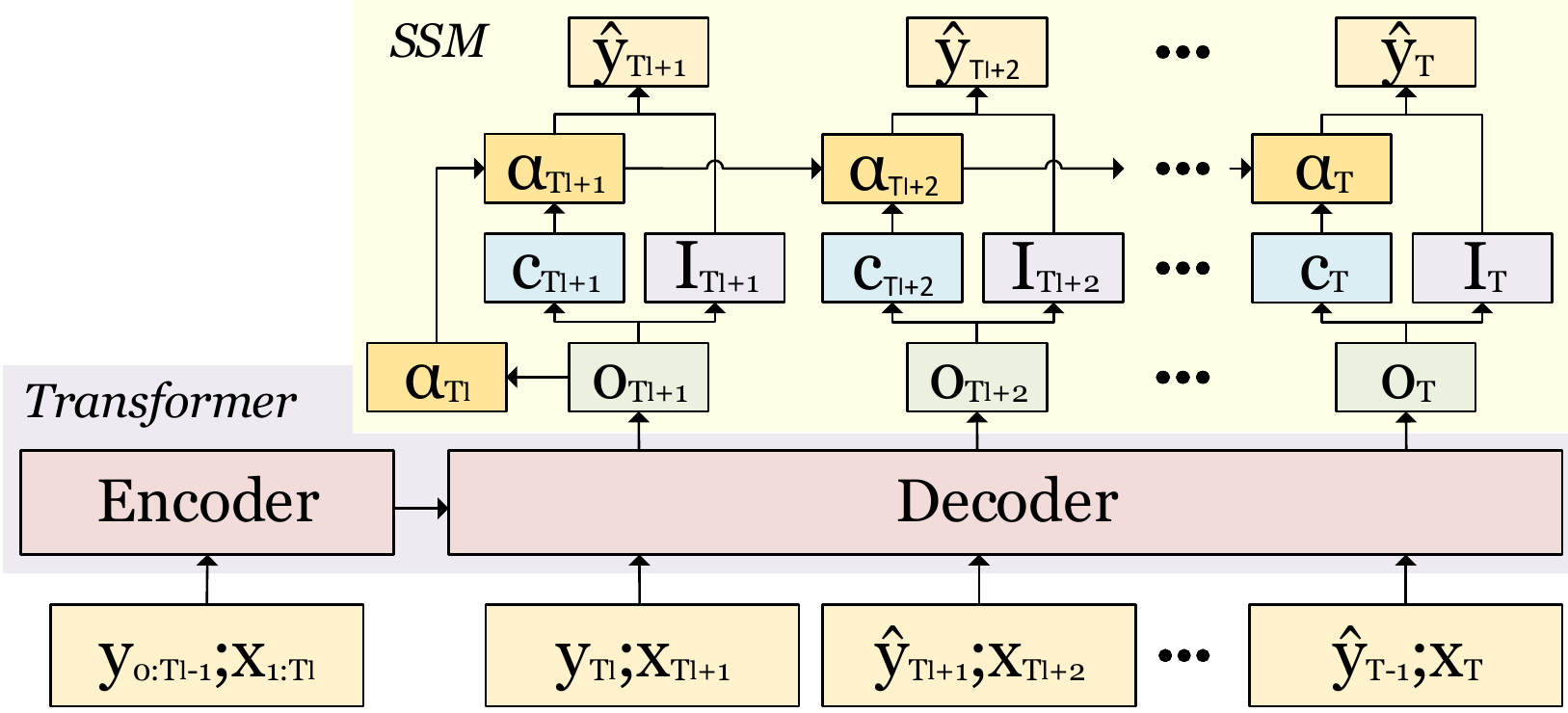}
	\caption{Structure of SSDNet}
	\label{SSDNet}
\end{figure}

As shown in Fig. \ref{SSDNet}, the Transformer extracts latent components $o_t$ from historical time series $y_{1:T_l},x_{1:T_t}$ in Eq. (\ref{o_Transformer}), and the latent components are further used to estimate the parameters of SSM. 
Note that the Transformer could be replaced by any sequence model to extract $o_t$ and learn the parameters of SSM. In Section VB, we evaluate the use of LSTM instead of Transformer in SSDNet; this modified architecture is called SSDNet-LSTM.

\begin{equation}
o_t = f(y_{1:T_l},x_{1:T_t})
\label{o_Transformer}
\end{equation}

Then, we employ additive time series decomposition model in the form of SSM and generate the prediction $\hat{y_{t}}$ at step $t$ by summing up the trend component $T_{t}$, seasonality component $S_{t}$ and probability component $I_{t}$. 
We decompose the times series using SSM and generate the forecasting result as:
\begin{equation}
\hat{y_{t}}=z_{t}^{T} \alpha_{t}+I_{t}, \quad t=1, \ldots, T_h
\label{observations}
\end{equation}
\begin{equation}
\alpha_{t+1}=\Gamma_{t} \alpha_{t}+c_{t}
\label{state_transition}
\end{equation}
\begin{equation}
I_{t} \sim \mathcal{N}(0,\sigma_{I_t}^2)
\label{I_Gaussian}
\end{equation}
where the latent state vector $\alpha_{t}\in \Re^{s\times 1}$ contains trend $Tr_{t}$ and seasonality $S_{t}$, $s$ is the number of seasonality and $I_t$ is sampled from the Gaussian distribution with zero mean and $\sigma_{I_t}^2$ variance. 

Different from the traditional SSMs, we remove the random noise term from the state transition in Eq. (\ref{state_transition}) to prevent the state noise propagation and estimate the probabilistic term $I_{t}$ directly as shown in Eq. (\ref{observations}). 
The SSM part of SSDNet does not process historical series directly but instead utilizes the latent component generated by the Transformer and encodes the information of the time steps before the forecasting horizon in the initial state vector $\alpha_{T_l}$. We also introduce the innovation term $c_t\in \Re^{s\times 1}$ in the state transition Eq. (\ref{state_transition}) to allow SSDNet to learn stochastic trends with fluctuations in time series. This is necessary as the simple summation of linear trend and seasonality components cannot model complex series effectively and would introduce large residuals (see Fig. \ref{decomposition_results}). 
While the traditional SSMs need to use Kalman filter to update the posterior and derive the covariance matrix to optimize noise terms, SSDNet uses the Transformer architecture resulting in less matrix computation and memory usage.

The innovation term $c_t$ and the variance $\sigma_{I_t}^2$ are learnt from the latent factor $o_t$ directly, as illustrated in Eq. (\ref{I_model}) and (\ref{c_model}). The use of the Softplus function ensures that SSDNet always generates positive variance, and the HardSigmoid function \cite{HardSigmoid15NIPS} is used for a speed-up:
\begin{equation}
\begin{split}
\sigma_{I_{t}}^2 =g_s(o_t)
=&\text{Softplus}(\text{Linear}(o_t))	\\	
=&\log(1+\exp(\text{Linear}(o_t)))
\end{split}
\label{I_model}
\end{equation}
\begin{equation}
\begin{split}
c_{t} =g_c(o_t)
=&\text{HardSigmoid}(\text{Linear}(o_t))-0.5\\
=&\left\{\begin{array}{ll}
-0.5 & \text { if } \mathrm{x} \leq-3 \\
0.5 & \text { if } \mathrm{x} \geq+3 \\
\mathrm{\text{Linear}(o_t)} / 6 & \text { otherwise }
\end{array}\right.
\end{split}
\label{c_model}
\end{equation}

The parameters state vector $\alpha_{t}$, state transition matrix $\Gamma_t$ and result transition matrix $z_t$ that determine the structure of SSM in Eq. (\ref{observations}) and (\ref{state_transition}) are shown below:
\begin{equation}
\alpha_{t}=\left(\begin{array}{c}
Tr_{t} \\
S_{1:s-1, t} \\
\end{array}\right),
{z}_{t}=\left(\begin{array}{l}
1 \\
1 \\
0_{s-2}
\end{array}\right),
\end{equation}
\begin{align*}
\Gamma_{t}=\left(\begin{array}{cccc}
1  & 0_{s-2}^{\prime} & 0 \\
0  & -1_{s-2}^{\prime} & -1 \\
0_{s-2}  & I_{s-2} & 0_{s-2} 
\end{array}\right)
\end{align*}

Note that $\Gamma_t$ and $z_t$ are non-trainable and fixed for all time steps. In this work, we assume that the trend follows a random walk process, and other processes such as moving average could be modelled by adjusting the first row of the transition matrix $\Gamma_t$. The initial trend $Tr_{0}$ and seasonality $S_{-s+1},..., S_{-1}, S_{0}$ values of the state vector $\alpha_0$ are unknown and need to be generated based on the latent components, they are learned from historical steps:
\begin{equation}
\begin{split}
\alpha_{0} &=g_c(o_{T_{l+1}})=\text{HardSigmoid}(\text{Linear}(o_{T_{l+1}}))-0.5
\end{split}
\label{alpha_model}
\end{equation}

Alternatively, the forecasting result (\ref{observations}) could be written as the aggregation of the predicted trend, seasonality and variance to provide probabilistic result:
\begin{equation}
\hat{y_{t}}\sim \mathcal{N}(Tr_t+S_t,\,\sigma_{I_t}^2)\,
\end{equation}
where we consider the real-world data follows a Gaussian distribution (other distribution function could be used in Eq. (\ref{I_Gaussian}) for higher flexibility), and predictions are sampled from the distribution. The $\rho$-quantile output could be generated via the inverse cumulative probability distribution: $\hat{y_{t}}=F_{t}^{-1}(\rho)$.

At step $t$, Eq. (\ref{state_transition}) could be written in an additive form, where the trend and seasonality evolve by adding the innovation term at each time step, as shown in Eq. (\ref{trend}) and (\ref{seasonality}). The trend and seasonality models are based on the classic random walk and dummy seasonal, and the change of trend and seasonality are dominated by the innovation variable. The innovation variable for trend can be different at every time step and allow SSDNet to model the nonstationary trend with varying slope. For the extreme case when all innovation terms are zero, the trend would be constant and the seasonality over all periods will be the same. 
\begin{equation}
Tr_t= Tr_{t-1}+ c_t[1]
\label{trend}
\end{equation}
\begin{equation}
S_t=-\sum_{j=1}^{s-1}S_{t-j}+ c_t[2]
\label{seasonality}
\end{equation}

\begin{theorem}
	$\forall t>0$ and $t\in\mathbb{Z}$, the range of trend $Tr_t$ and seasonality $S_t$ of SSDNet are bounded: $
	Tr_t\subset[-(t+1)\times0.5, (t+1)\times0.5]$, $S_t\subset[-(s-1+t)\times0.5, (s-1+t)\times0.5]$. 
	\label{SSDNetbounds}
\end{theorem}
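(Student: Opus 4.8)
The plan is to handle the two recursions \eqref{trend} and \eqref{seasonality} separately, exploiting that every quantity fed into them is range-limited by construction. First I would record the elementary fact that, since $g_c$ is a HardSigmoid shifted by $-0.5$ (see \eqref{c_model} and \eqref{alpha_model}), its output always lies in $[-0.5,0.5]$; hence $Tr_0\in[-0.5,0.5]$, every initial seasonality entry of $\alpha_0$ lies in $[-0.5,0.5]$, and $c_t[1],c_t[2]\in[-0.5,0.5]$ for all $t$. Since these are the only inputs the two recursions ever see, the argument reduces to propagating a $\pm0.5$ budget through linear recursions.

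For the trend, unrolling \eqref{trend} gives $Tr_t=Tr_0+\sum_{k=1}^{t}c_k[1]$, and the triangle inequality yields $|Tr_t|\le 0.5+t\cdot 0.5=(t+1)\cdot 0.5$, which is exactly the stated bound; this step is routine.

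The seasonality bound is where the real work lies, and here a naive estimate fails: bounding the $s-1$ lagged terms of \eqref{seasonality} one by one and substituting an inductive hypothesis $|S_{t'}|\le(s-1+t')\cdot 0.5$ gives a right-hand side that grows quadratically in $s$, far past the target. The key observation I would use is that \eqref{seasonality} is equivalent to the ``dummy seasonal'' identity $\sum_{j=0}^{s-1}S_{t-j}=c_t[2]$ for every $t\ge 1$; subtracting this identity at $t-1$ telescopes to the period-shift relation $S_t=S_{t-s}+c_t[2]-c_{t-1}[2]$, valid for $t\ge 2$. Because $|c_t[2]-c_{t-1}[2]|\le 1$, this ties $S_t$ to a single earlier value with only a bounded increment, so the recursion no longer fans out. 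I expect this structural step, not any computation, to be the main obstacle: without it the per-term triangle inequality is simply too lossy and the claimed linear-in-$t$ bound does not close.

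With the period-shift relation in hand I would finish by strong induction on $t$. For the base case $t=1$ the $s-1$ lagged terms in \eqref{seasonality} are all initial entries of $\alpha_0$, so $|S_1|\le(s-1)\cdot 0.5+0.5=s\cdot 0.5=(s-1+1)\cdot 0.5$. For $t\ge 2$ I would first check that $|S_{t-s}|\le(t-1)\cdot 0.5$ in either case: if $t-s\le 0$ then $S_{t-s}$ is an initial entry and $0.5\le(t-1)\cdot 0.5$ since $t\ge 2$, while if $t-s\ge 1$ the inductive hypothesis gives $|S_{t-s}|\le(s-1+(t-s))\cdot 0.5=(t-1)\cdot 0.5$; then the period-shift relation gives $|S_t|\le|S_{t-s}|+1\le(t+1)\cdot 0.5\le(s-1+t)\cdot 0.5$, where the last inequality uses $s\ge 2$. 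This closes the induction and establishes both bounds.
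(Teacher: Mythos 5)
Your proof is correct, and on the point you identify as the crux it is genuinely more rigorous than the paper's own argument. The paper proves the theorem by instantiating everything at its extreme: it sets $Tr_0=0.5$, $S_0=-\sum_{j=1}^{s-1}(-0.5)=0.5(s-1)$ and $c_t[1]=c_t[2]=0.5$ at every step, adds an increment of $0.5$ per time step to each component to get $(t+1)\times 0.5$ and $(s-1+t)\times 0.5$, and invokes the symmetry of $g_c$ for the lower bounds. For the trend this is sound and coincides with your unrolling $Tr_t=Tr_0+\sum_{k=1}^{t}c_k[1]$, since \eqref{trend} is monotone in its inputs. For the seasonality, however, the paper's ``initial value plus $0.5$ per step'' reasoning implicitly treats \eqref{seasonality} as if $S_t\le S_{t-1}+0.5$, which does not follow: the minus sign in front of $\sum_{j=1}^{s-1}S_{t-j}$ makes the recursion non-monotone in its lagged values, and, as you correctly observe, the honest per-term triangle-inequality induction overshoots the target and does not close. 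Your period-shift identity $S_t=S_{t-s}+c_t[2]-c_{t-1}[2]$, obtained by telescoping the dummy-seasonal constraint $\sum_{j=0}^{s-1}S_{t-j}=c_t[2]$, is exactly the missing structural ingredient; it delivers the stated bound and in fact a considerably tighter one (growth like $t/s$ rather than $t$ in the time index). In short, the paper's argument buys brevity and an intuition for where the formula $(s-1+t)\times 0.5$ comes from, while yours actually establishes it.
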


\begin{proof}
	Assume that innovation term $c_t$, trend and seasonality have maximum values: 1)  $c_t$ outputs its maximum value of 0.5 at all time steps, 
	2) the initial trend has its maximum value $Tr_0=0.5$ and 
	3) the seasonality has its maximum value of $S_0=-\sum_{j=1}^{s-1} -0.5=0.5\times(s-1)$.
	
	Following Eq. (\ref{trend}) and (\ref{seasonality}), the trend and seasonality increase using the innovation terms with maximum values of $c_t[1]=0.5$ and $c_t[2]=0.5$ respectively at each time step.
	
	Considering the initial values and increments together, the maximum values of the trend and seasonality are $(t+1)\times0.5$ and $(s-1+t)\times0.5$ at time step $t$, correspondingly.  
	Since $g_c(.)$ is a symmetric activation function, its lower bound is the negative value of its upper bound, hence the minimum values of trend and seasonality are the negative of their maximum values. 
\end{proof}

Theorem \ref{SSDNetbounds} implies that the upper and lower bound of the SSDNet trend and seasonality components expand in the time domain and SSDNet can model time series where the variation in the seasonal pattern increases over time. The variation of the trend component is significantly smaller than that of the seasonality and helps SSDNet to model a slow-moving trend and significantly varying seasonality.
The constraints of both components help SSDNet to learn an inherent structure and allow $\alpha_t$ to generate meaningful waveform (observable in our case studies) and enable the interpretability of forecasting results.	

In our case since the data is normalized to have zero mean and unit variance, when visualizing trend and seasonality components, the bias (data mean) is added to the trend and both the trend and seasonality are scaled (data variance) back.
\subsection{Loss Function}

SSDNet needs to provide both accurate point and probabilistic forecasts. 
Accordingly, we developed the loss function shown in  Eq. (\ref{loss}), which considers both the point and probabilistic forecasts by combining the MAE and NLL using the regularization parameter $a$: 
higher $a$ allows the model to pay more attention to the probabilistic forecast. 
The point forecast is $\hat{y}_t$ and it consists of trend and seasonality. The parameters of SSDNet are optimized by minimizing this function.

\begin{equation}
\begin{split}
&L(\hat{y}_{T_l+1:T},\sigma_{I_{T_l+1:T}}^2,y_{T_l+1:T},a)\\=&~ a\times\operatorname{NLL}(\hat{y}_{T_l+1:T},\sigma_{I_{T_l+1:T}}^2,y_{T_l+1:T})\\
&+
\operatorname{MAE}(\hat{y}_{T_l+1:T},y_{T_l+1:T}) \\
=&-\frac{a}{2T_h}\times\Big(T_h \log (2 \pi)+ \sum_{t=T_l+1}^{T} \log \left|\sigma_{I_t}^2\right| \\
&+\sum_{t=T_l+1}^{T} (y_t-\hat{y}_t)^{2} \sigma_{I_t}^{-2} \Big)
+\frac{1}{T_h}\sum_{t=T_l+1}^{T} \left|y_t-\hat{y}_t\right|
\end{split}
\label{loss}
\end{equation}

\section{Experiments}
We compare the performance of SSDNet with nine other models: six state-of-the-art autoregressive deep learning models (DeepAR, DeepSSM, N-BEATS-G, N-BEATS-I, LogSparse Transformer and Informer), a SSM (SARIMAX), an interpretable regression model (Prophet) and a persistence model: 
\begin{itemize}
	\item Persistence is a typical baseline in forecasting and considers the time series of the previous day as the prediction for the next day. For the Exchange dataset we use the last 20 steps of the input sequence.
	\item SARIMAX \cite{Durbin01book} is an extension of the ARIMA and can handle seasonality with exogenous factors.
	\item Prophet \cite{Prophet18} is an interpretable regression model predicting trend, seasonality and holiday components.
	\item DeepAR \cite{DeepAR20} is a widely used sequence-to-sequence probabilistic forecasting model. 
	\item DeepSSM \cite{DeepSSM18NIPS} fuses SSM with RNNs to incorporate structural assumptions and learn complex patterns from the time series. It is the state-of-the-art deep forecasting model that employs SSM.
	\item N-BEATS \cite{N-BEATS} is based on backward and forward residual links and stacks of fully connected layers. N-BEATS-G provides generic forecasting results, while N-BEATS-I provides interpretable results by decomposing the time series into trend and seasonality. We introduced covariates to N-BEATS at the input of each block to facilitate multivariate series forecasting. 
	\item LogSparse Transformer \cite{Logsparse19NIPS} is a recently proposed variation of the Transformer architecture for time series forecasting with convolutional attention and sparse attention; it is denoted as "LogTrans" in Table \ref{Accuracy}. 
	\item Informer \cite{Informer20} is a Transformer-based forecasting model based on the ProbSparse self-attention and self-attention distilling. We modified the Informer to provide probabilistic results by using Eq. (\ref{I_model}) as the output layer.
\end{itemize}

\begin{table}[!t]
	\renewcommand{\arraystretch}{1}	
	\centering
	\begin{tabular}{C{1.5cm} C{.8cm} C{.6cm} C{.7cm} C{.6cm} C{.8cm} C{.6cm} } 
		\specialrule{.1em}{.05em}{.05em} 
		&$\lambda$& $\delta$ & $d_{hid}$ & $n_{l}$ & $d_{k\&v}$ &$n_{h}$\\	
		\hline
		Sanyo&0.005& 0 & 12 & 2 &6 &2\\				
		Hanergy&0.005& 0 & 16 & 3 &6 &3\\			
		Solar&0.005& 0.1 & 16 & 3 &6 &3\\			
		Electricity&0.001& 0.1 & 24 & 3 &8 &2\\		
		Exchange&0.005&0&12&2&4&3\\
		\specialrule{.1em}{.05em}{.05em} 
	\end{tabular}
	\caption{Hyperparameters for SSDNet}
	\label{SSDNet_Parameters} 
\end{table}
\begin{table*}[!ht]
	\renewcommand{\arraystretch}{1}	
	\centering
	\begin{tabular}{ C{2cm}   C{2.2cm}  C{2.2cm}  C{2.2cm}  C{2.2cm} C{2.2cm} } 
		\specialrule{.1em}{.05em}{.05em} 
		& Sanyo& Hanery& Solar& Electricity& Exchange\\
		\hline
		Persistence &0.154/-&0.242/-&0.256/-&0.069/-&0.016/-\\ 		
		SARIMAX&0.124/0.096&0.145/0.098&0.256/0.192&0.196/0.079&\textbf{0.010}/\textbf{0.006}\\ 
		Prophet&0.104/0.054&0.152/0.079&0.268/0.169&0.112/0.055&0.017/0.013\\ 
		DeepAR
		&0.070/0.031&0.092/0.045&0.222$^\diamond$/0.093$^\diamond$&0.075$^\diamond$/0.040$^\diamond$&0.014/0.009\\
		DeepSSM
		&0.042/0.023&0.070/0.053&0.223/0.181&0.083$^\diamond$/0.056$^\diamond$&0.014/0.012\\
		LogTrans 
		&0.067/0.036&0.088/0.047&0.210$^\diamond$/0.082$^\diamond$&\textbf{0.059}$^\diamond$/{0.034}$^\diamond$&0.017/0.008\\
		Informer
		&0.046/0.022&0.084/0.046&0.215/0.115&0.068/\textbf{0.033}&0.014/0.009\\
		N-BEATS-I&0.091/&0.154/-    &0.215/-&0.102/-&0.014/-\\
		N-BEATS-G&0.077/-&0.132/-&0.212/-&0.061/-&0.018/-\\			
		\hline
		SSDNet&\textbf{0.040}/\textbf{0.020}&\textbf{0.059}/\textbf{0.032}&\textbf{0.209}/\textbf{0.074}&0.068/\textbf{0.033}&0.013/\textbf{0.006}\\	
		\specialrule{.1em}{.05em}{.05em} 
	\end{tabular}
	\caption{$\rho$0.5/$\rho$0.9-loss of data sets with various granularities. $\diamond$ denotes results from \protect\cite{Logsparse19NIPS}.}
	\label{Accuracy} 
\end{table*}
All models were implemented with PyTorch 1.6 on Tesla T4 16GB GPU under Linux environment.
The deep learning models were optimized by mini-batch gradient descent with the Adam optimizer and a maximum number of epochs 200. We used Bayesian optimization for hyperparameter search for all deep learning models with  maximum number of 20 iterations. The hyperparameters with a minimum loss on the validation set were selected. The models used for comparison were tuned based on the authors' recommendations. The probabilistic forecasting models use the NLL loss, and the point forecasting model (N-BEATS) uses the mean squared loss.

Following the experimental setup in \cite{Logsparse19NIPS}, \cite{Yang20ICONIP} and \cite{TCAN_Yang}, we used the following training, validation and test split: 	for Sanyo and Hanergy - the data from the last year as test set, the second last year as validation set for early stopping and the remaining data (5 years for Sanyo and 4 years for Hanergy) as training set; for Solar and Electricity - the last week data as test set (from 25/08/2006 for Solar and 01/09/2014 for Electricity), the week before as validation set and the remaining data as training set. 
For the Exchange dataset, we used the 480 working days from 02/02/2015 is test set, the 480 working days before as validation set and the remaining data as training set. 
For all data sets, the data preceding the validation set is split in the same way into three subsets and the corresponding validation set is used to select the best hyperparameters.

For the Transformer-based models, we used learnable position and ID (for Solar, Electricity and Exchange sets) embedding. 
For SSDNet, the seasonality $s$ was set to 20 for Sanyo, Hanergy and Exchanges and 24 for Solar and Electricity, the loss function regularization parameter $a$ was fixed as 0.5, the learning rate $\lambda$ was fixed, the dropout rate $\delta$ was chosen from \{0, 0.1, 0.2\}, the hidden layer dimension size $d_{hid}$ and number of layers $n_{l}$ were chosen from \{8, 12, 16, 24, 32\} and \{2, 3, 4\}, the query and value's dimension size $d_{k\&v}$ and number of heads $n_{h}$ were chosen from \{4, 6, 8, 12\} and \{2, 3, 4\}.  	
The selected best hyperparameters for SSDNet are listed in Table \ref{SSDNet_Parameters} and used for the evaluation of the test set.

Following \cite{DeepAR20,DeepSSM18NIPS,Logsparse19NIPS,Yang20ICONIP,TCAN_Yang}, we report the standard $\rho$0.5 and $\rho$0.9-quantile losses. Note that $\rho$0.5 is equivalent to the Mean Absolute Percentage Error (MAPE) \cite{TRMF16NIPS}. Given the ground truth $y$ and $\rho$-quantile of the predicted distribution $\hat{y}$, the $\rho$-quantile loss is given by:
\begin{equation}
\begin{split}
\mathrm{QL}_{\rho}(y, \hat{y})&=\frac{2\times\sum_{t} P_{\rho}\left(y_{t}, \hat{y}_{t}\right)}{\sum_{t}\left|y_{t}\right|}, 
\\
\quad P_{\rho}(y, \hat{y})&=\left\{\begin{array}{ll}
\rho(y-\hat{y}) & \text { if } y>\hat{y} \\
(1-\rho)(\hat{y}-y) & \text { otherwise }
\end{array}\right.
\end{split}
\end{equation}

\subsection{Accuracy Analysis}
The $\rho$0.5 and $\rho$0.9 losses are shown in Table \ref{Accuracy}. As N-BEATS and Persistence do not produce probabilistic forecasts, only the $\rho$0.5-loss is reported for them. We can see that overall SSDNet is the most accurate model - it outperforms all the other methods on all data sets except Electricity and Exchange. For the Electricity data set, SSDNet has the best $\rho$0.9-loss and second-best $\rho$0.5-loss among the probabilistic forecasting models and is also the best among the interpretable models (N-BEATS-I, DeepSSM and Prophet).  
For the Exchange data set, SSDNet is the second-best model after SARIMAX. 

SSDNet performs better than DeepSSM on all datasets which indicates that the proposed new architecture, loss function and the use of fixed SSM were beneficial. 

The two N-BEATS models perform well on Solar and Electricity (univariate data sets) but much worse on Sanyo and Hanergy (multivariate, including the weather features); a possible reason is that N-BEATS was designed for univariate forecasting and our modification to handle multivariate datasets was not sufficiently effective in this case. 

We further investigated the performance on the Exchange dataset, for which the deep neural network models, including the proposed SSDNet, performed worse than SARIMAX. This finding is consistent with \cite{Lai18SIGIR}, and the reason is that Exchange is the only data set without repetitive patterns and a seasonal component. 
Fig. \ref{Visualized_results_exchange} and Fig. \ref{Visualized_results_SARIMA} show the predictions of SSDNet and SARIMAX for 2 test samples from the Exchange data set. We can see that SSDNet tends to generate predictions with a smooth trend and some variations, while SARIMAX tends to generate predictions that do not vary much with time. Hence, the success of SARIMAX on the Exchange data set can be explained with the characteristics of this data set -  it is more stable and does not have strong repetitive patterns and seasonality, as shown in Section IIA and Fig. \ref{pacf}.

Another interesting finding for Exchange is that when we simply use the last time step value $\mathbf{Y}_{T_l}$ of the input series as the predictions $\mathbf{Y}_{T_l+1:T_l+T_h}$, we obtain $\rho$0.5-loss of 0.010 which is as good as SARIMAX and outperforms all other models.
However, SSDNet and the other deep learning models demonstrate significantly higher accuracy than SARIMAX on the other four data sets which include repetitive patterns.

\begin{table*}[!ht]
	\renewcommand{\arraystretch}{1}	
	\centering
	\begin{tabular}{ C{2cm}   C{2.2cm}  C{2.2cm}  C{2.2cm}  C{2.2cm} C{2.2cm} } 
		\specialrule{.1em}{.05em}{.05em} 
		& Sanyo& Hanery& Solar& Electricity& Exchange\\
		\hline
		DeepAR 
		&0.070/0.031&0.092/0.045&0.222$^\diamond$/0.093$^\diamond$&0.075$^\diamond$/0.040$^\diamond$&0.014/0.009\\		
		DeepSSM 
		&0.042/0.023&0.070/0.053&0.223/0.181&0.083$^\diamond$/0.056$^\diamond$&0.014/0.012\\	
		SSDNet&\textbf{0.040}/\textbf{0.020}&\textbf{0.059}/\textbf{0.032}&{0.209}/\textbf{0.074}&\textbf{0.068}/\textbf{0.033}&0.013/\textbf{0.006}\\	
		SSDNet-LSTM&\textbf{0.040}/\textbf{0.020}&0.066/0.037&\textbf{0.205}/0.075&0.071/0.037&\textbf{0.012}/\textbf{0.006}\\
		\specialrule{.1em}{.05em}{.05em} 
	\end{tabular}
	\caption{Ablation study - $\rho$0.5/$\rho$0.9-loss. $\diamond$ denotes results from \protect\cite{Logsparse19NIPS}.}
	\label{Ablation_Accuracy} 
\end{table*}
\begin{table*}[!ht]
	\renewcommand{\arraystretch}{1}	
	\centering
	\begin{tabular}{ C{2cm}   C{2.2cm}  C{2.2cm}  C{2.2cm}  C{2.2cm} } 
		\specialrule{.1em}{.05em}{.05em} 
		& Sanyo& Hanery& Solar& Exchange\\
		\hline
		DeepSSM&25163$\pm$201&9750$\pm$135&224343$\pm$692&71188$\pm$343\\	
		SSDNet&6006$\pm$91&3081$\pm$91&61720$\pm$206&13817$\pm$62\\	
		SSDNet-LSTM&5153$\pm$68&2079$\pm$29&56989$\pm$127&7890$\pm$99\\	
		\hline
	\end{tabular}
	\caption{Training time per epoch (milliseconds) - comparison of DeepSSM and SSDNets on Sanyo, Hanergy, Solar and Exchange data sets.}
	\label{train_speed} 
\end{table*}
\begin{table*}[!ht]
	\renewcommand{\arraystretch}{1}	
	\centering
	\begin{tabular}{ C{2cm}   C{2.2cm}  C{2.2cm}  C{2.2cm}  C{2.2cm} } 
		\specialrule{.1em}{.05em}{.05em} 
		& Sanyo& Hanery& Solar& Exchange\\
		\hline
		DeepSSM&242$\pm$54&233$\pm$6&278$\pm$9&250$\pm$7\\	
		SSDNet&411$\pm$177&438$\pm$132&573$\pm$215&389$\pm$125\\	
		SSDNet-LSTM&254$\pm$4&257$\pm$8&299$\pm$12&270$\pm$5\\	
		\hline
	\end{tabular}
	\caption{Testing time (milliseconds) - comparison of DeepSSM and SSDNets on Sanyo, Hanergy, Solar and Exchange data sets.}
	\label{test_speed} 
\end{table*}

\subsection{Ablation Analysis}
To evaluate the effectiveness of the SSM part of SSDNet, we conducted additional analysis  - an ablation study. Table \ref{Ablation_Accuracy} shows the performance of SSDNet-LSTM, which is SSDNet with the Transformer part replaced by LSTM. Thus, we keep the SSM part for decomposition and prediction the same but replace the Transformer with LSTM for SSM parameters estimation. 

SSDNet-LSTM is compared with DeepAR \cite{DeepAR20} and DeepSSM \cite{DeepSSM18NIPS}. DeepAR uses LSTM to generate predictions directly and DeepSSM uses LSTM and Kalman filter to estimate the parameters of SSM, while SSDNet-LSTM can be considered as a combination of DeepAR and our proposed SSM or as a variation of DeepSSM by replacing the classic SSM with our proposed SSM without a Kalman filter.

Table \ref{Ablation_Accuracy} shows that SSDNet-LSTM outperforms both DeepAR and DeepSSM, indicating the effectiveness of the SSM part for decomposition and foresting. 
SSDNet-LSTM is also competitive with SSDNet which shows the robustness and flexibility of our fixed form SSM - its parameters can be estimated by different architectures - the Transformer, LSTM and other neural network models.

\subsection{Speed Analysis}
	
	We evaluate the training and testing time of the two SSDNet models (SSDNet and SSDNet-LSTM) and compare them with DeepSSM which also employs SSM. All models are trained on the same computer configuration; we report the average elapsed time and the standard deviation of 10 runs. Table \ref{train_speed} and \ref{test_speed} show the training time per epoch and testing time in milliseconds for the Sanyo, Hanergy, Solar and Exchange data sets. 
	No speed test was conducted for the Electricity data set as we report the accuracy results from their papers \cite{DeepSSM18NIPS,Logsparse19NIPS} and the values of the hyperparameters used are not known.

	Comparing the training speed, we can see that both SSDNet and SSDNet-LSTM, which do not use Kalman filter, are significantly faster than DeepSSM - the average training speed per epoch of DeepSSM is 390.48$\%$ and 458.24$\%$ times slower than SSDNet and SSDNet-LSTM respectively. In terms of testing speed, DeepSSM is the fastest, very closely followed by SSDNet-LSTM (7.68$\%$ slower) and then SSDNet. This is because the Kalman filter in DeerSSM is not required for inference and the architecture of LSTM used by DeepSSM is simpler than the Transformer used by SSDNet. 
	
	Overall, the training speed of SSDNet and SSDNet-LSTM is significantly improved due to the changes in SSM and their testing speed is competitive with DeepSSM.

\begin{figure}[!]
	\centering	  
	\subfigure[]{\includegraphics[width=\columnwidth]{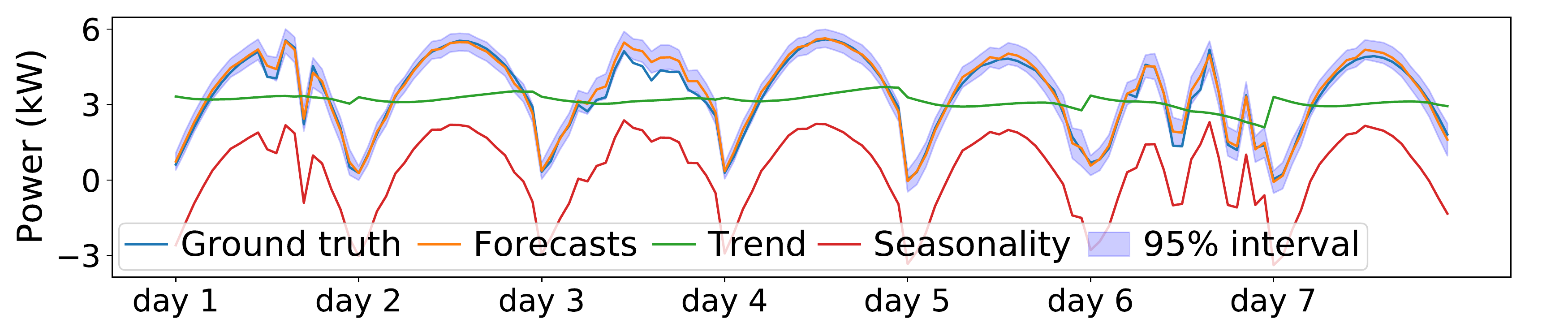}}		\subfigure[]{\includegraphics[width=\columnwidth]{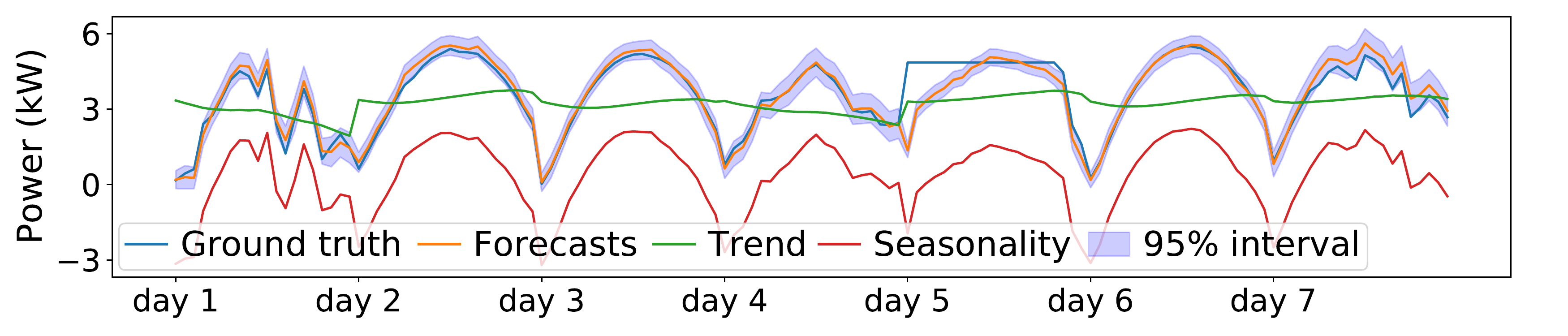}}	
	\subfigure[]{\includegraphics[width=\columnwidth]{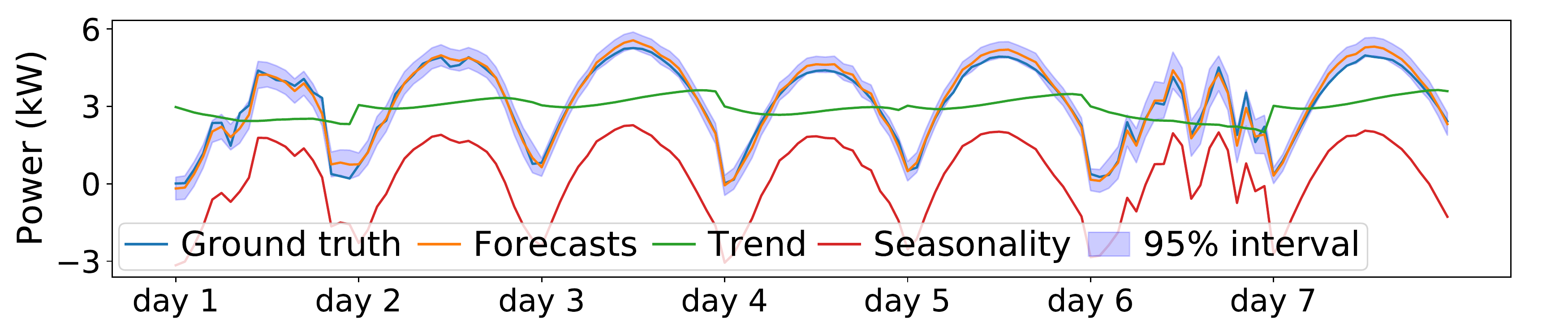}}		\subfigure[]{\includegraphics[width=\columnwidth]{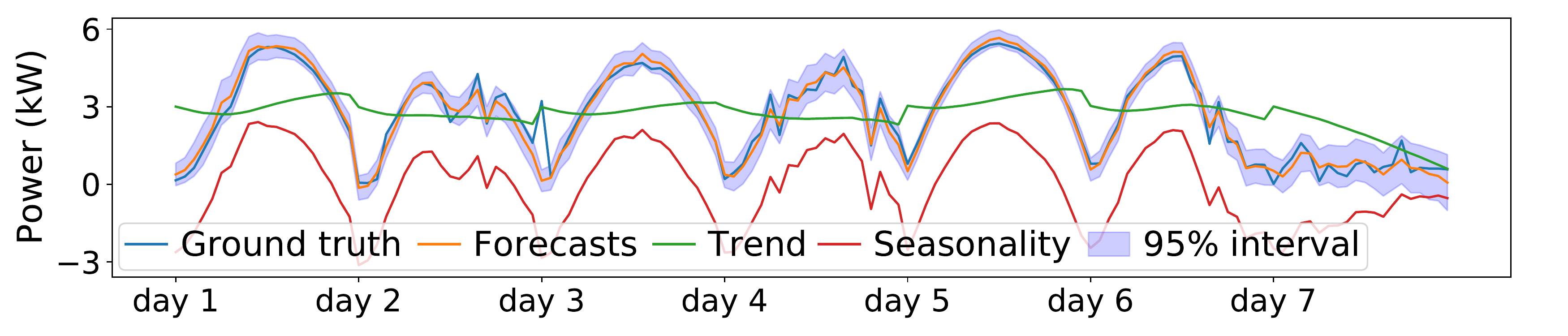}}	
	\subfigure[]{\includegraphics[width=\columnwidth]{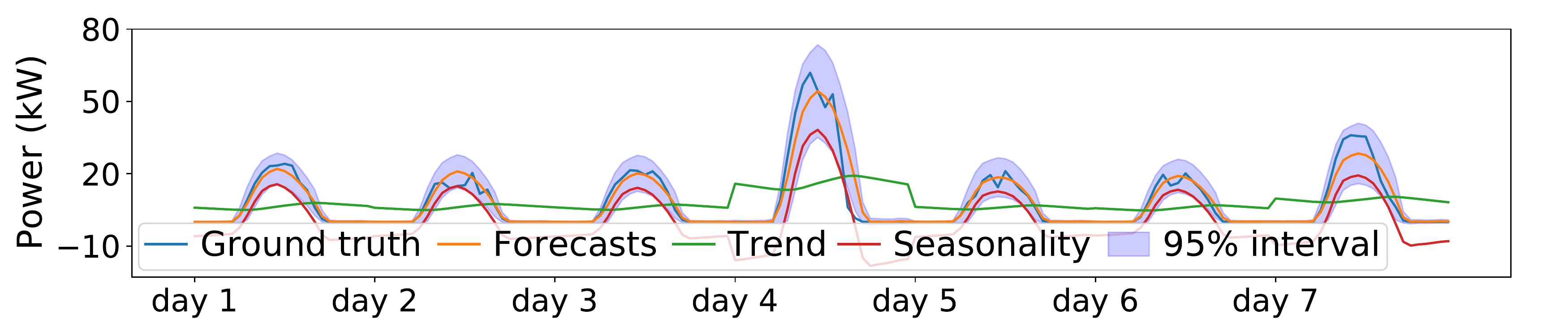}}			\subfigure[]{\includegraphics[width=\columnwidth]{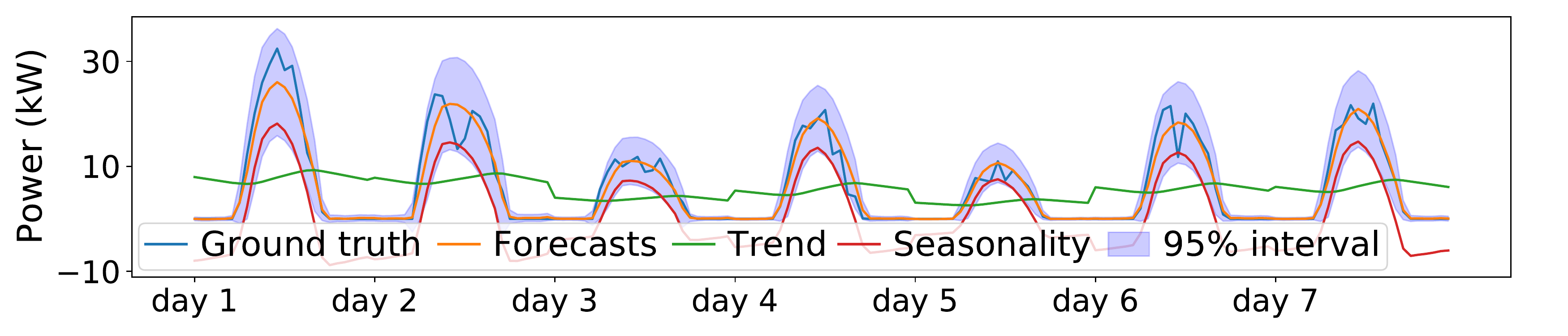}}		
	\subfigure[]{\includegraphics[width=\columnwidth]{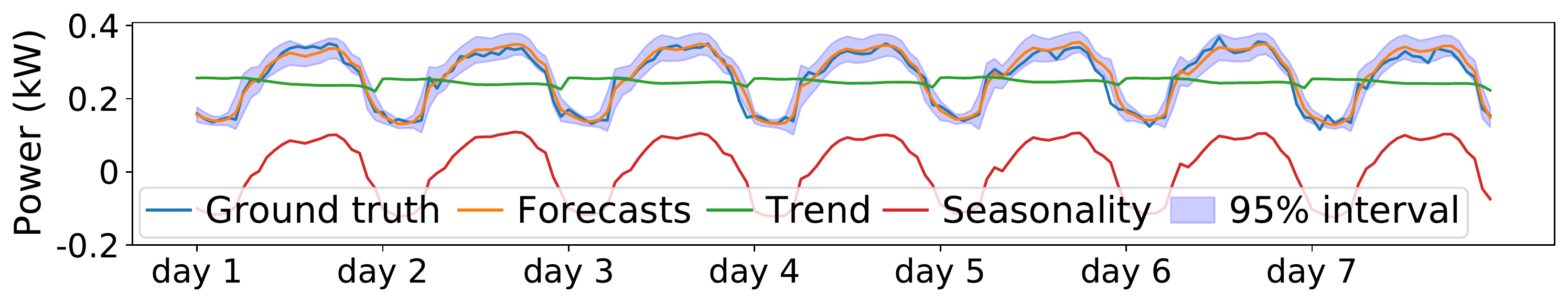}}	\subfigure[]{\includegraphics[width=\columnwidth]{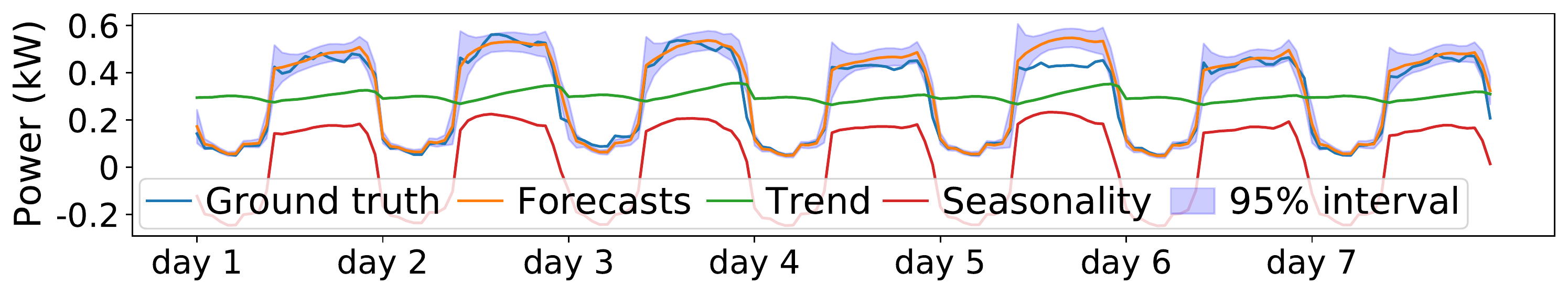}}
	\caption{Actual vs SSDNet predicted data with trend and seasonality components and 95$\%$ confidence  intervals: (a) and (b) - Sanyo; (c) and (d) - Hanergy; (e) and (f) - Solar; (g) and (h) - Electricity data sets}
	\label{Visualized_results}
\end{figure}
\begin{figure}[!ht]
	\centering	  
	\subfigure[]{\includegraphics[width=\columnwidth]{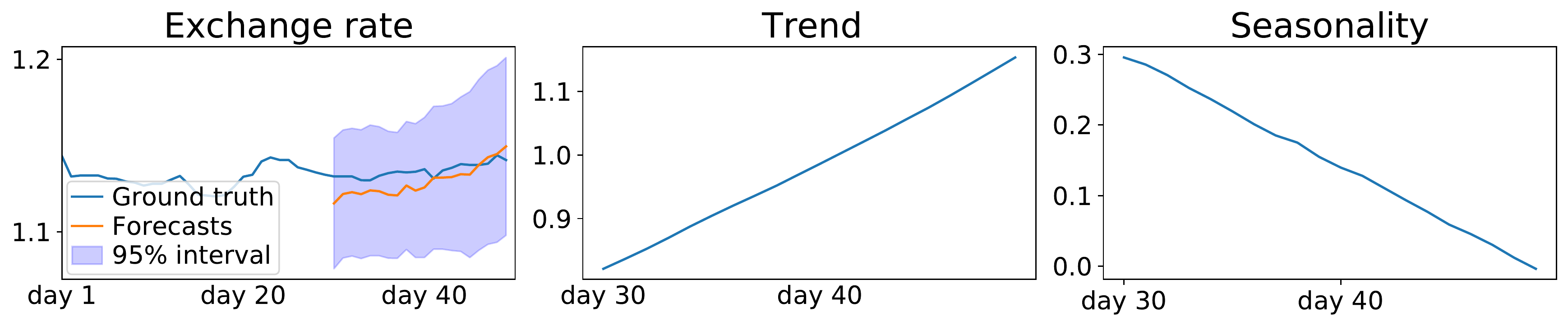}}		\subfigure[]{\includegraphics[width=\columnwidth]{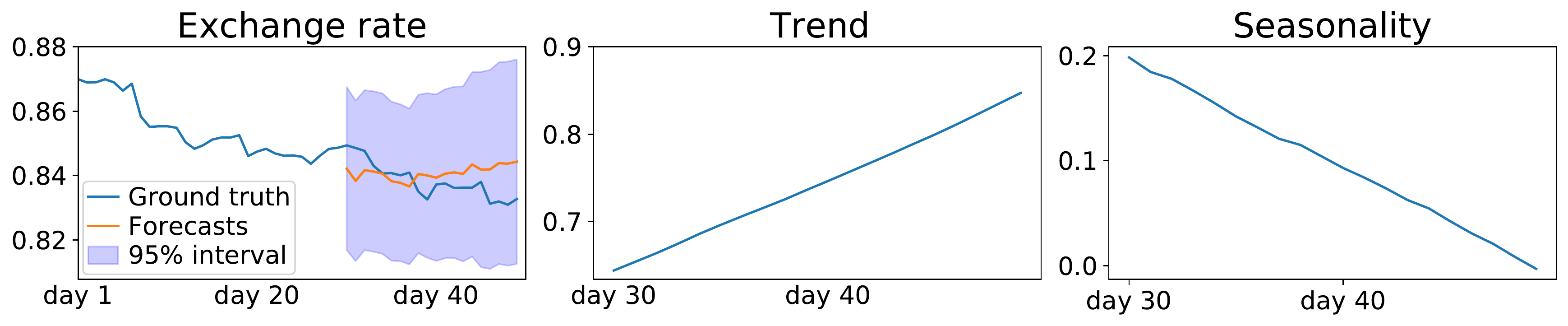}}	
	\caption{Actual vs SSDNet predicted data with trend and seasonality components and 95$\%$ confidence  intervals for Exchange data set}
	\label{Visualized_results_exchange}
\end{figure}
\begin{figure}[!ht]
	\centering	  
	\subfigure[]{\includegraphics[width=.49\columnwidth]{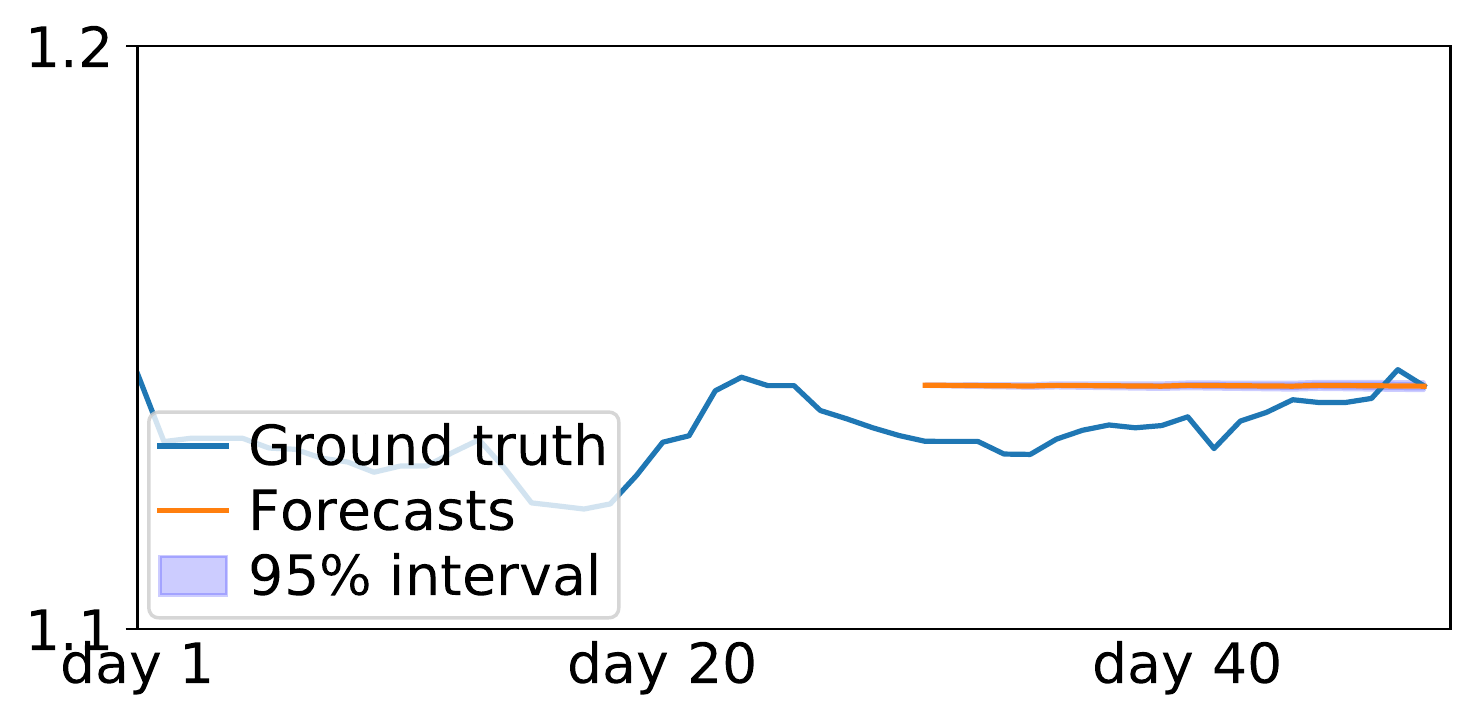}}		\subfigure[]{\includegraphics[width=.49\columnwidth]{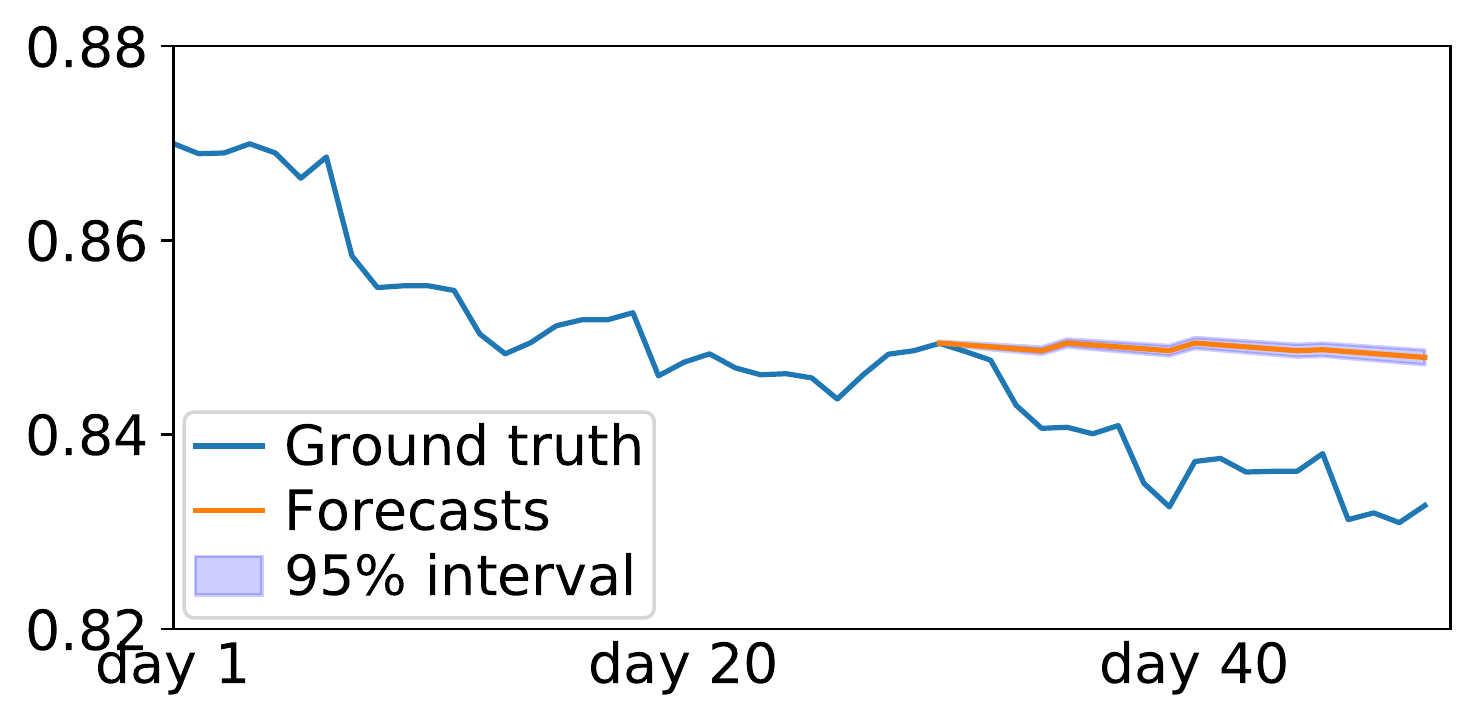}}	
	\caption{Actual vs SARIMAX predicted data and 95$\%$ confidence  intervals for Exchange data set}
	\label{Visualized_results_SARIMA}
\end{figure}
\begin{figure}[!ht]
	\centering	  
	\subfigure[]{\includegraphics[width=1\columnwidth]{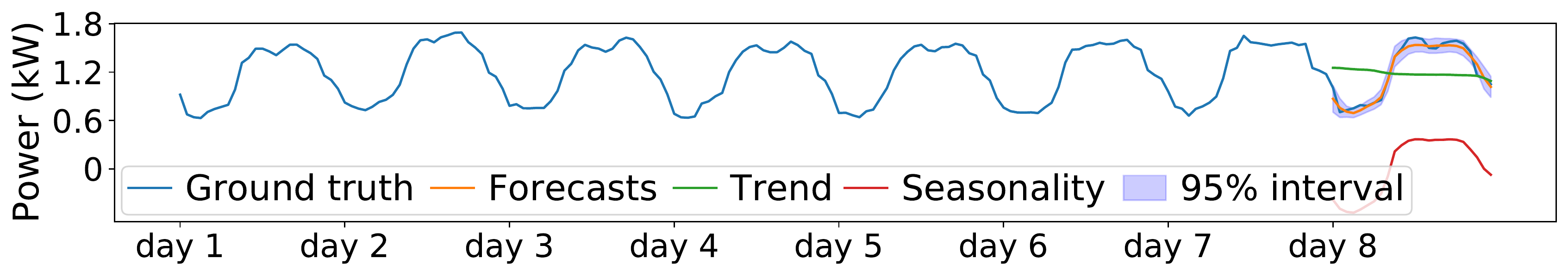}}
	\subfigure[]{\includegraphics[width=1\columnwidth]{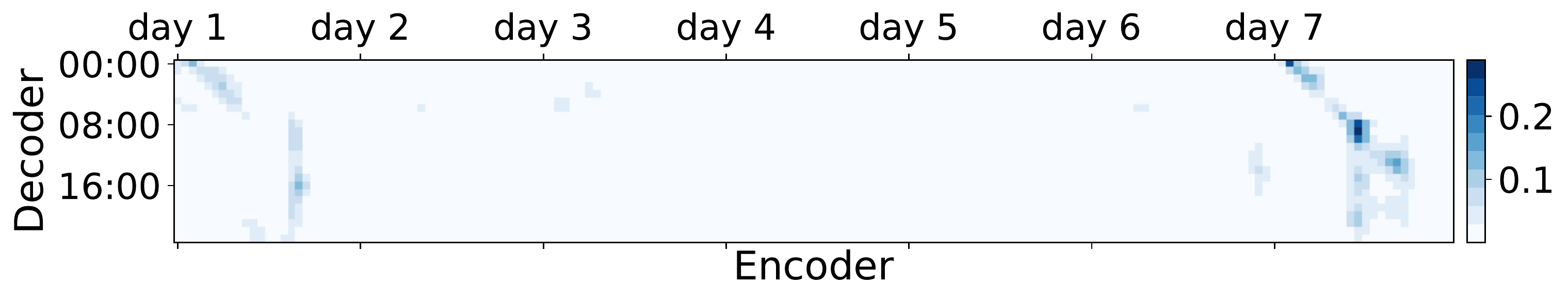}}		\subfigure[]{\includegraphics[width=1\columnwidth]{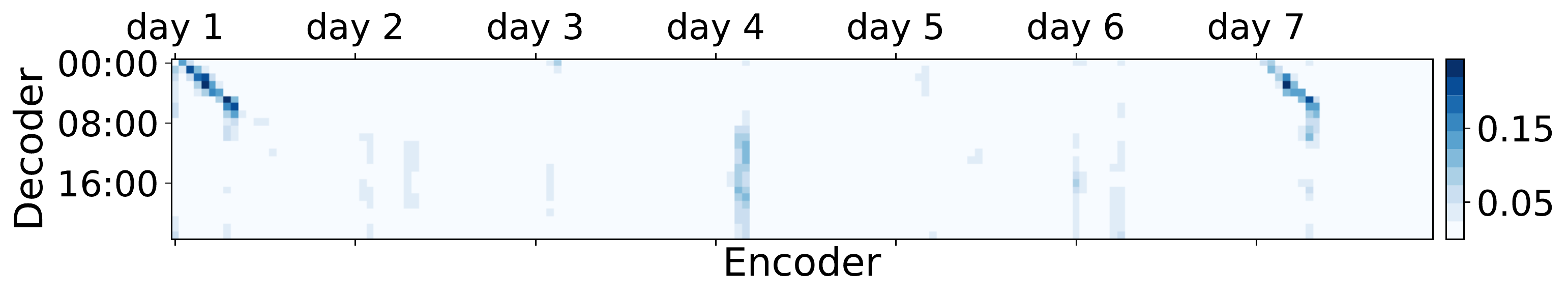}}
	\caption{SSDNet case study on Electricity data set (a) actual vs predicted data; (b) attention patterns of the first head in the last layer and (c) attention patterns of the second head in the last layer}
	\label{elect_atten}
\end{figure}

\subsection{Interpretability Analysis}
Fig. \ref{Visualized_results} presents the SSDNet output for a 7-day test set sample for the Sanyo, Hanergy, Solar and Electricity data sets, and Fig. \ref{Visualized_results_exchange} presents the SSDNet output for 2 test samples from the Exchange data set. We can see that the results are interpretable: the trend is smooth and monotonically decreasing or increasing, the seasonality shows regular and cyclic fluctuations (except for the Exchange data set which doesn't have seasonality). 

SSDNet can model a rapidly changing trend - for example, in Fig. \ref{Visualized_results} (d), the solar power generation drops rapidly on day 7 and SSDNet correctly outputs a declining trend.
SSDNet can not only learn smooth seasonal patterns but also has the ability to learn seasonal patters with small random fluctuations  (e.g. at days 1 and 6 of Fig. \ref{Visualized_results} (a)) due to the introduction of innovation term $c_t$.
For the Exchange data set which doesn't have clear seasonality, the peak-to-peak magnitude of the learned SSDNet seasonal component is small. 

Fig. \ref{elect_atten} (a) presents the results for 8 consecutive days from the test set for the Electricity data  - 7-day past history and 1-day forecasting output of SSDNet, and shows obvious daily seasonality. Fig. \ref{elect_atten} (b) and (c) show the attention patterns of two heads in the last multi-head attention layer of SSDNet.
The attention mappings show the importance of the previous time steps for predicting the future time steps.

The first head tends to attend to the first and seventh day before the predicted day, which is consistent with the weekly cycle of electricity data. The second head attends to almost all days. A possible explanation is that the first head mainly focuses on learning the seasonality, while the second head mainly focuses on learning the trend.

\section{Conclusion}

We presented SSDNet, a novel deep learning approach, for probabilistic and interpretable forecasting of time series data. 
SSDNet combines the advantages of deep learning with the interpretability of SSM models. SSDNet employs the Transformer architecture to learn the temporal patterns, extract latent components and estimate the parameters of SSM. It then applies SSM to generate the interpretable forecasting results with nonstationary trend and seasonality components. SSDNet also applies attention mechanism to visualize the important past time steps for the predicted future steps.

We evaluated the performance of SSDNet on five time series forecasting tasks. 
The results show that in terms of accuracy SSDNet outperformed the state-of-the-art deep learning models DeepSSM, DeepAR, LogSparse Transformer, Informer and N-BEATS, and the statistical models SARIMAX and Prophet. It was also able to provide interpretable results by showing clear trend and seasonality components. Our ablation study showed the effectiveness of the SSM part and that SSDNet with LSTM instead of Transformer also performed well. SSDNet was also much faster to train than DeepSSM. Hence, the results show that SSDNet is a promising method for time series forecasting. 

\bibliographystyle{IEEEtran}
\bibliography{main}

\end{document}